\documentclass[11pt]{article}
\usepackage{fullpage}
\usepackage{url}
\usepackage{subcaption}

\usepackage{amsmath}
\usepackage{amsthm}
\usepackage{amsfonts}
\usepackage{amssymb}
\usepackage{mathtools}
\usepackage{xcolor}
\usepackage{enumitem}
\usepackage{booktabs}

\usepackage{algorithm}
\usepackage{algorithmic}

\newtheorem{theorem}{Theorem}
\newtheorem{lemma}{Lemma}
\newtheorem{corollary}{Corollary}
\newtheorem{assumption}{Assumption}
\newtheorem{definition}{Definition}

\newtheorem{proposition}{Proposition}

\usepackage{caption}
\usepackage{booktabs}
\usepackage{subcaption}
\usepackage{enumitem}
\usepackage{natbib}

\makeatletter
\DeclareFontEncoding{LS1}{}{}
\DeclareFontSubstitution{LS1}{stix}{m}{n}
\DeclareMathAlphabet{\mathscr}{LS1}{stixscr}{m}{n}
\makeatother

\usepackage{tikz}
\usepackage{tikz-qtree}
\usepackage{mathtools}
\usetikzlibrary{calc}
\usetikzlibrary{decorations.pathreplacing,decorations.markings}
\tikzstyle{dot}=[circle,fill,black,inner sep=1pt]

\usepackage{color}
\usepackage{xspace}

\title{Offline Constrained RLHF with Multiple Preference Oracles}

\author{Brenden Latham and Mehrdad Moharrami\thanks{ 
            B.\ Latham is a student in the Computer Science Department, University of Iowa, Iowa City, IA, USA, \texttt{brenden-latham@uiowa.edu}.
		M.\ Moharrami is with Computer Science Department, University of Iowa, Iowa City, IA, USA, \texttt{mehrdad-moharrami@uiowa.edu}.
}}
\date{\today}

\begin{document}
	
\pgfdeclarelayer{background}
\pgfdeclarelayer{foreground}
\pgfsetlayers{background,main,foreground}

\maketitle

\begin{abstract}
We study offline constrained reinforcement learning from human feedback with multiple preference oracles. Motivated by applications that trade off performance with safety or fairness, we aim to maximize target population utility subject to a minimum protected group welfare constraint. From pairwise comparisons collected under a reference policy, we estimate oracle‑specific rewards via maximum likelihood and analyze how statistical uncertainty propagates through the dual program. We cast the constrained objective as a KL-regularized Lagrangian whose primal optimizer is a Gibbs policy, reducing learning to a convex dual problem. We propose a dual‑only algorithm that ensures high‑probability constraint satisfaction and provide the first finite‑sample performance guarantees for offline constrained preference learning. Finally, we extend our theoretical analysis to accommodate multiple constraints and general $f$-divergence regularization.
\end{abstract}

\section{Introduction}
\begin{table*}[t]
    \centering
    \caption{Comparison of RLHF methods.}
    \label{tab:comparison}
    \small 
    \begin{tabular}{@{}lllll@{}} 
        \toprule
        \textbf{Method} & \textbf{Objective} & \textbf{Setting} & \textbf{Guarantees} & \textbf{Constraint Type} \\
        \midrule
        \cite{chakraborty2024maxmin} & Max-Min Welfare & Online & Asymptotic/None & Unconstrained \\
        \cite{ramesh2024group} & Robust Utility & Offline & None & Robustness Penalty \\
        \cite{ge2024axioms} & Social Choice & Theoretical & Axiomatic & N/A \\
        \cite{dai2024safe} & Lagrangian PPO & Online & Empirical & Soft (Lagrangian) \\
        \cite{zhou2024beyond} & Scalarization & Online & Empirical & Pareto \\
        \midrule
        \textbf{Ours} & \textbf{Constrained RL} & \textbf{Offline} & \textbf{Finite-sample} & \textbf{Hard (Certified)} \\
        \bottomrule
    \end{tabular}
\end{table*}

\emph{Reinforcement Learning from Human Feedback} (RLHF) has quickly become the dominant practical strategy for aligning powerful AI systems with human values when direct reward engineering is infeasible or brittle. Early preference-based RL works established that effective control policies can be recovered from comparative human judgments alone~\citep{jain2013learning,fekete2014preference,daniel2015active,wirth2016model}, and subsequent deep-RL methods demonstrated that reward models trained on pairwise feedback routinely outperform hand-designed objectives in complex domains such as games and robotics~\citep{christiano2017deep}. The same feedback-driven paradigm transformed language-model alignment: approaches like preference-guided PPO~\citep{ziegler2020fine}, recursive reward modeling~\citep{wu2021recursively}, and instruction-tuned systems such as InstructGPT~\citep{ouyang2022training} materially improved helpfulness, safety, and factuality. These striking empirical gains have motivated a fast-growing theoretical literature that seeks to formalize when and why RLHF works, and to quantify its statistical and computational limits~\citep{xiong2024iterative,ji2025ai,kaufmann2024survey}.

Theoretical progress began with \citet{novoseller2020dueling}, who framed learning from trajectory comparisons as a regret-minimization problem solved via Dueling Posterior Sampling. Follow-up methods — PPS and PEPS \citep{xu2020preference}, PR-LSVI \citep{wu2024making}, and PARL \citep{chakraborty2024parl} — provided regret bounds and convergence guarantees. Other contributions extended the model of feedback and dynamics \citep{chen2022human}, derived minimax regret rates \citep{saha2023dueling}, and analyzed policy optimization under linear and neural approximations \citep{du2024exploration}. Work has also broadened feedback models \citep{zhu2023principled,zhan2024provable}, developed interactive and feedback-efficient protocols \citep{kong2022provably}, and given off-policy, privacy-preserving, and reduction-based results \citep{li2023reinforcement,chowdhury2023differentially,wang2023rlhf}. Most recently, researchers have sharpened safety and robustness guarantees, producing high-confidence constraint satisfaction and provable convergence under unknown preference mappings \citep{chittepu2025reinforcement,zhang2025provable}.

Despite this progress, existing analyses predominantly assume a single Oracle governing preferences. Real-world deployments commonly require simultaneously optimizing multiple, potentially competing objectives~\citep{zhou2024beyond}: maximizing utility for a primary user population while guaranteeing that protected groups meet a prespecified welfare floor. Such multi-objective alignment is critical for fairness (e.g., avoiding cultural misappropriation or disparate impacts across demographics)~\citep{siddique2023fairness}, for legal compliance with open-textured norms like the European Convention on Human Rights~\citep{botskina2025do}, and for safety-critical systems that must remain inside formally certified envelopes~\citep{dai2024safe}. These practical needs naturally lead to a constrained-RLHF formulation. While our framework handles arbitrary constraint sets, we present the canonical two-oracle case for clarity and notational simplicity (extensions in Section \ref{sec:extension}):
\begin{equation*}
\max_{\pi} \mathbb{E}{\pi}[r_1^\star] - \eta D{\mathrm{KL}}(\pi|\pi_{0})
\quad\text{s.t.}\quad \mathbb{E}{\pi}[r_2^\star] \ge J{\min},
\end{equation*}
where $r_1^\star$ captures primary-user utility, $r_2^\star$ measures protected-group welfare, $\pi_{0}$ is a reference policy, $J_{\min}$ is the required minimum for the protected group, and $\eta > 0$ trades off utility against deviation from $\pi_{0}$.

This theoretical analysis distinguishes our approach from recent work on fairness and robustness in RLHF (Table \ref{tab:comparison}). While \citet{chakraborty2024maxmin} address fairness via an unconstrained max-min welfare objective across groups, their dynamic reward reweighting lacks strict safety guarantees. Similarly, the distributionally robust approach of \citet{ramesh2024group} optimizes against uncertainty sets $\mathcal{U}$ to handle data shifts, which differs from the explicit, certified feasibility required for legal compliance.  Furthermore, while \citet{ge2024axioms} provide axiomatic foundations for aggregate preferences, they offer no algorithmic or finite-sample analysis. Our setting is most closely aligned with Lagrangian-based systems like Safe RLHF \citep{dai2024safe} and multi-objective formulations \citep{zhou2024beyond}. However, while prior work emphasizes the empirical viability of online PPO-based tuning, we establish the first rigorous theoretical framework and finite-sample guarantees necessary for reliable offline deployment, providing a foundation that naturally extends to multiple constraints and $f$ divergence regularization.

In this paper, we investigate \emph{constrained RLHF} in the \emph{offline} setting, where the learner receives a fixed batch of preference data consisting of paired comparisons annotated by both the primary and the protected populations. The learner must synthesize a policy without any further environment interaction. Our main contributions are:
\begin{itemize}
\item We present the first formal treatment of constrained RLHF with multiple reward oracles, integrating preference-based reward estimation and constraint satisfaction into a unified framework.
\item We design a dual-only algorithm that jointly optimizes the policy and Lagrange multiplier, using reward estimates inferred from offline pairwise comparisons provided by both oracles.
\item We provide non-asymptotic, sample-dependent and sample-independent guarantees showing how dataset coverage governs the optimality gap and constraint violation of the learned policy.
\item We show the generality of our approach by extending our bounds to multiple constraints and general 
$f$-divergence regularization. 
\end{itemize}
\emph{To the best of our knowledge, this work is the first to establish finite‑sample guarantees for constrained RLHF with multiple reward oracles in the offline setting.}

\section{Related Work}
We survey literature across three axes: theoretical RLHF, constrained RL, and the emerging intersection of alignment and safety, highlighting the gap addressed by our study.

\textbf{RLHF:}
Theoretical guarantees for RLHF have matured in online \citep{novoseller2020dueling, saha2023dueling} and offline regimes following empirical successes in robotics \citep{christiano2017deep} and LLMs \citep{ouyang2022training}. In the offline regime, \citet{zhu2023principled} provided the first finite-sample guarantees for policies trained via maximum likelihood estimation under the Bradley–Terry model with linear rewards~\citep{bradley1952rank}. Building on this, \citet{zhan2024provable} introduced the FREEHAND algorithm, which generalizes earlier approaches by allowing a broader class of reward functions and feedback models. \citet{li2023reinforcement} provided the first off-policy analysis for offline RLHF via the DCPPO algorithm. \citet{zhu2024iterative} addressed overfitting and overoptimization in reward learning and proposed the IDS algorithm for improved robustness. \citet{chowdhury2023differentially} studied privacy-preserving reward learning in preference-based RL. \citet{wang2023rlhf} presented reduction-based approaches that adapt reward-based RL algorithms to the RLHF setting, showing how theoretical guarantees transfer. \citet{zhou2025unified} unified the analysis of privacy and robustness in offline RLHF.

\textbf{Constraint RL:} Constrained RL historically traces back to CMDP formulations \citep{kolesar1970markovian,keith1989randomized,altman1999cmdp}. Foundational optimization ideas such as Lagrangian relaxation \citep{everett1963generalized,shapiro1979mathematical} and primal-dual updates \citep{altman1998constrained,efroni2020exploration,bertsekas2016nonlinear} motivated algorithms that embed multipliers directly into RL procedures \citep{zheng2020Constrained,ding2020natural,ying2022dual}. In the offline setting, rigorous guarantees are now available for primal–dual critic methods \citep{hong2024primal}, LP-based algorithms under partial coverage \citep{hong2025offline}, and multi-constraint primal policy optimization \citep{guan2024poce}.

\textbf{Alignment and Fairness:} Work in this area includes axiomatic treatments \citep{ge2024axioms}, max–min welfare \citep{chakraborty2024maxmin}, and group-robustness via uncertainty sets \citep{ramesh2024group}, while practical methods use multi-objective DPO \citep{zhou2024beyond} or Lagrangian PPO-style tuning \citep{dai2024safe}. These primarily emphasize empirical results or axioms; we complement them by providing the first high-confidence, finite-sample analysis for constrained offline RLHF.

\section{Formulation}
In this section, we present the problem formulation. We extend the standard preference-based RLHF setup \citep{ouyang2022training,xiong2024iterative} to the constrained setting. Let $\mathcal{X}$ be the finite prompt space and $\mathcal{A}$ the finite response space.  A prompt $x\sim d_0$ is first sampled from a fixed distribution $d_0$.  Conditional on $x$, two independent responses $a,a'\in\mathcal{A}$ from the reference policy $\pi_0$ are produced.  Feedback is then collected from two human-preference oracles: a target-population oracle $o_1$ and a protected-group oracle $o_2$. Each oracle’s binary preference, following the Bradley–Terry model, is modeled as a Bernoulli random variable whose success probability is given by a logistic function of the latent reward gap:
\begin{equation*}
    o_k(x,a,a') \sim \operatorname{Ber}\left(\sigma(r_k^*(x,a)-r_k^*(x,a'))\right)    
\end{equation*}  
 for $k\in\{1,2\}$ where $r_k^*:\mathcal{X}\times\mathcal{A}\to\mathbb{R}$ is unknown reward function for oracle $k$ and $\sigma(z)=(1+e^{-z})^{-1}$ is the logistic link.  Specifically, $o_k(x,a,a') = 1$ indicates that oracle $o_k$ prefers action $a$ over action $a'$, denoted as $a \succ_{k} a'$.

We operate in the offline setting, where the learner has access to a dataset
\begin{align*}
    \mathcal{D}_N = \{(x_i, a_i^{(1)}, a_i^{(2)}, y_{i,1}, y_{i,2})\}_{i=1}^N
\end{align*}
consisting of prompts, response pairs, and corresponding binary preferences. Each tuple $(x_i, a_i^{(1)}, a_i^{(2)})$ is drawn \emph{i.i.d.} from the distribution $(x, a, a') \sim \mu_0 \coloneqq d_0(x)\pi_0(a|x)\pi_0(a'|x)$. For each triplet, the preferences $y_{i,1}, y_{i,2} \in \{0,1\}$ are sampled from the two oracles $o_1$ and $o_2$. More specifically,
\begin{align*}
     \mathbb{P}(y_{i,k} = 1 | x,a^{(1)}_i, a^{(2)}_i) &=  \mathbb{P}\big(a^{(1)}_i \succ_{k} a^{(2)}_i \big| x,a^{(1)}_i, a^{(2)}_i\big) \\ &= \sigma(r^*_k(x_i, a^{(1)}_i) - r^*_k(x_i, a^{(2)}_i))
\end{align*}

The learner seeks a policy $\pi^*: \mathcal{X} \to \Delta(\mathcal{A})$ that maximizes the expected reward of the target population, remains sufficiently close to a reference policy $\pi_0$, and ensures a minimum level of welfare for the protected group. Here, $\Delta(\mathcal{A})$ denotes the set of all probability distributions over the response set $\mathcal{A}$. Formally, the goal is to solve the following constrained optimization problem:
\begin{align*}
    \underset{\pi\in\Pi}{\max} ~ & \mathbb{E}_{x \sim d_0}\!\!\left[\mathbb{E}_{a \sim \pi(\cdot|x)}\left[r_1^*(x,a)\right]\! 
    - \eta D_{\mathrm{KL}}\left(\pi(\cdot|x)\|\pi_0(\cdot|x)\right) \right] \\
    \text{s.t.} ~ & \mathbb{E}_{x \sim d_0}\!\!\left[\mathbb{E}_{a \sim \pi(\cdot|x)}\left[r_2^*(x,a)\right]\right] \ge J_{\min},
\end{align*}
where $\Pi$ denotes the set of all policies $\pi:\mathcal{X} \to \Delta(\mathcal{A})$, $J_{\min}$ denotes the minimum acceptable reward for the protected group, and $\eta > 0$ controls the trade-off between utility and divergence from the reference policy. Here, we restrict our formulation to one protected group and defer the generalization to Section \ref{sec:extension}. For notational convenience, we rewrite the problem in abstract form:
\begin{align}\label{eq:primal}
    \underset{\pi \in \Pi}{\max}~~ J(\pi) \quad\text{s.t.}\quad c(\pi) \leq 0,
\end{align}
where $J(\pi)$ denotes the regularized target reward objective, and
\begin{align*}
    c(\pi) \coloneqq J_{\min} - \mathbb{E}_{x \sim d_0}\left[\mathbb{E}_{a \sim \pi(\cdot | x)}[r^*_2(x,a)] \right]
\end{align*}
is the constraint function. Throughout the paper we adopt the shorthand notation
\begin{align*}
    \mathbb{E}_\pi \coloneqq \mathbb{E}_{x \sim d_0}\mathbb{E}_{a \sim \pi(\cdot | x)} \quad , \quad \mathbb{E} \coloneqq \mathbb{E}_{x \sim d_0}
\end{align*}
unless stated otherwise.

Following~\citet{xiong2024iterative,zhu2023principled}, we impose the following assumptions on the reference policy and the reward functions.

\begin{assumption}[Full coverage]\label{assump:fullSupport}
For every $x \in \mathcal{X}$, the reference policy $\pi_0(\cdot \mid x)$ has full support over the finite action space $\mathcal{A}$.
\end{assumption}

\begin{assumption}[Linear reward]\label{assump:linearReward}
For each $k\in\{1,2\}$, the latent reward is assumed to be linear in a known feature map $\phi$, i.e., $r_k^*(x,a)=\langle\theta_k^*,\phi(x,a)\rangle,$
with $\|\phi(x,a)\|\le 1$ for all $(x,a)$ and $\|\theta_k^*\|\le B$.
\end{assumption}

\begin{assumption}[Identifiability]\label{assump:identifiability}
Let $\Delta(x;a,a') \coloneqq \phi(x,a) - \phi(x,a')$ and define the population
difference covariance matrix $\Sigma_\infty \coloneqq \mathbb{E}_{(x,a,a') \sim \mu_0}\!\big[\Delta(x;a,a')\Delta(x;a,a')^\top\big].$ We assume that $\Sigma_\infty \succ 0$, or equivalently,$\mathrm{span}\{\Delta(x;a,a') : x \in \mathcal{X}, a,a' \in \mathcal{A}\}= \mathbb{R}^d,$
so that $\theta_k^*$ for $k\in\{1,2\}$ is uniquely identifiable from pairwise comparisons.
\end{assumption}
Notice that the identifiability assumption is necessary for the uniqueness of $\theta_k$. Without it, there exists a nonzero vector $v$ such that 
$v^\top \Delta(x;a,a') = 0$, for all $x \in \mathcal{X}$ and $a,a' \in \mathcal{A}.$ In that case, the likelihood is invariant along the ray $\theta_k + t v$ for $t \in \mathbb{R}$, so the solution set for $\theta_k$ is an affine line rather than a point.

\section{Analysis}
In this section, we analyze the constrained optimization problem introduced above. First, using the offline preference dataset $\mathcal{D}_N$, we obtain maximum-likelihood estimators $\widehat{r}_1$ and $\widehat{r}_2$ of the latent reward functions $r_1^*$ and $r_2^*$. We then construct the corresponding Lagrangian, derive the associated dual problem, and verify the convexity conditions required for strong duality.

\subsection{Reward Estimation}
Under Assumptions~\ref{assump:linearReward} and \ref{assump:identifiability}, given $\{(x_i,a^{(1)}_i, a^{(2)}_i)\}_{i=1}^N$, the log-likelihood of the preferences collected from oracle $k\in\{1,2\}$ is given by
\begin{align*}
    \ell_{\mathcal{D}_N}^{(k)}(\theta_k) = \sum_{i=1}^N \ell_{i}^{(k)}(\theta_k),
\end{align*}
where the individual sample log-likelihood is
$\ell_{i}^{(k)}(\theta_k) = y_{i,k}\log \sigma(\langle \theta_k, \Delta_i \rangle) + (1 - y_{i,k}) \log \sigma(-\langle \theta_k, \Delta_i \rangle),
$ and $ \Delta_i \coloneqq \phi(x_i, a_i^{(1)}) - \phi(x_i, a_i^{(2)}) $ represents the feature difference vector. Maximizing the individual log‐likelihoods $\ell_{\mathcal{D}_N}^{(k)}(\theta_k)$ for each $k\in\{1,2\}$ yields the maximum-likelihood estimators $\widehat\theta_k$.  By Lemma 3.1 of \citet{zhu2023principled}, the in-sample estimation error satisfies, with probability at least $1-2\delta$ for every $k\in\{1,2\}$,
\begin{align*}
\|\widehat\theta_k-\theta_k^*\|_{\Sigma_{N,\mathrm{reg}}}\le C\sqrt{\frac{d+\log(1/\delta)}{\gamma^2N}+\lambda_{\mathrm{reg}}B^2} \eqqcolon  \beta_N
\end{align*}
where $\|\cdot\|_{\Sigma_{N,\mathrm{reg}}}$ denotes the Mahalanobis norm induced by the regularized empirical covariance matrix $\Sigma_{N,\mathrm{reg}}\coloneqq \Sigma_{\mathcal D_N} + \lambda_{\mathrm{reg}} I$ with
\begin{align*}
\Sigma_{\mathcal D_N} =\frac1N\sum_{i=1}^N\Delta_i\Delta_i^\top.
\end{align*}

Here $d$ is the feature dimension, $B$ is an upper bound on the norm of the reward parameters, $\lambda_{\mathrm{reg}}>0$ is the regularization parameter, $\gamma = 1/(2+e^{-B}+e^{B})$ reflects the curvature of the logistic likelihood, and $C>0$ is a problem dependent constant.

We derive our guarantees by employing separate MLEs for each oracle. While we omit explicit modeling of oracle correlations for simplicity, our analysis ensures that the stated convergence rates remain agnostic to the underlying dependence. However, when such structure is known to exist, jointly maximizing the likelihood of the preferences from both oracles could improve the error constants and reduce estimator variance relative to the baseline treatment, while maintaining the same $O(1/\sqrt{N})$ convergence rate. We leave this extension for future work.

\subsection{Dual Problem Analysis}
Let $\mathcal{L}(\pi,\lambda)$ be the Lagrangian of the constrained problem \eqref{eq:primal}. We have
\begin{align*}
    \mathcal{L}(\pi,\lambda) &= \mathbb{E}_\pi[r^*_1(x,a) + \lambda r^*_2(x,a)] \\ 
     &- \eta\mathbb{E}[D_{\mathrm{KL}}(\pi(\cdot | x)\|\pi_0(\cdot | x))] - \lambda J_{\min}
\end{align*}
The dual problem is therefore $\min_{\lambda\geq 0} \max_\pi  \mathcal{L}(\pi,\lambda).$

Observe that the set of all policies $\Pi$ is a convex set. Moreover, for every $x \in \mathcal{X}$, the \textrm{KL}-divergence $D_{\mathrm{KL}}(\pi(\cdot\mid x)\|\pi_0(\cdot\mid x))$ is strictly convex in $\pi(\cdot\mid x)$. Hence, the primal problem is a strictly concave maximization with an affine constraint, and under Slater’s condition, the strong duality holds.

\begin{assumption}[Slater’s condition]\label{assump:slater}
There exists a policy $\tilde\pi\in\Pi$ and a slack $\rho>0$ such that
\begin{align*}
    \mathbb{E}_{\tilde\pi}\big[r_2^{*}(x,a)\big]\geq J_{\min}+\rho.    
\end{align*}
\end{assumption}

Slater’s condition can be verified using the greedy policy for $r_2^*$. With the estimate $\widehat\theta_2$, the slack $\rho$ can be approximated with high probability, since with probability at least $1-\delta$ we have $\|\widehat\theta_2-\theta_2^*\|_{\Sigma_{\mathcal D_N,\mathrm{reg}}}\leq \beta_N.$ Thus, if $\beta_N$ is sufficiently small, the slack can be estimated using 
$\mathbb{E}_{\tilde\pi}[\widehat r_2(x,a)] - J_{\min},$ where $\tilde\pi$ is the greedy policy for $\widehat r_2(x,a)=\langle \widehat\theta_2,\phi(x,a)\rangle$.

\begin{corollary}\label{cor:slaterData}
Let $\tilde\pi$ be the greedy policy w.r.t.$\widehat\theta_2$, i.e.,$\tilde\pi(a|x)=\mathbf{1}\{a\in\arg\max_{a'}\langle \widehat\theta_2,\phi(x,a')\rangle\}.$ With probability at least $1-\delta$,
\begin{align*}
\mathbb{E}_{\tilde\pi}[r_2^*(x,a)] 
\ge \mathbb{E}_{\tilde\pi}[\widehat r_2(x,a)] 
- \frac{\beta_N}{\sqrt{\lambda_{\min}(\Sigma_{N,\mathrm{reg}})}}.
\end{align*}
Hence, if the right-hand side is strictly larger than $J_{\min}$, Assumption~\ref{assump:slater} holds with slack
\begin{align*}
\widehat \rho = \tfrac{1}{2}\Big(\mathbb{E}_{\tilde\pi}[\widehat r_2(x,a)] 
- \frac{\beta_N}{\sqrt{\lambda_{\min}(\Sigma_{N,\mathrm{reg}})}} 
- J_{\min}\Big).
\end{align*}
\end{corollary}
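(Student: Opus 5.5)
We condition on the high-probability event from Lemma 3.1 of \citet{zhu2023principled}, namely $\|\widehat\theta_2-\theta_2^*\|_{\Sigma_{N,\mathrm{reg}}}\le\beta_N$, which holds with probability at least $1-\delta$. On this event the claim reduces to a deterministic pointwise bound on the reward error.

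\textbf{Step 1 (pointwise error).} For any $(x,a)$, write
\begin{align*}
\widehat r_2(x,a)-r_2^*(x,a)=\langle\widehat\theta_2-\theta_2^*,\phi(x,a)\rangle .
\end{align*}
Apply Cauchy--Schwarz in the $\Sigma_{N,\mathrm{reg}}$-geometry:
\begin{align*}
|\langle v,\phi\rangle|\le\|v\|_{\Sigma_{N,\mathrm{reg}}}\,\|\phi\|_{\Sigma_{N,\mathrm{reg}}^{-1}} .
\end{align*}
Since $\Sigma_{N,\mathrm{reg}}\succeq\lambda_{\min}(\Sigma_{N,\mathrm{reg}})I\succ 0$, which holds because $\lambda_{\mathrm{reg}}>0$, we have
\begin{align*}
\|\phi\|_{\Sigma_{N,\mathrm{reg}}^{-1}}^2\le\|\phi\|^2/\lambda_{\min}(\Sigma_{N,\mathrm{reg}})\le 1/\lambda_{\min}(\Sigma_{N,\mathrm{reg}}),
\end{align*}
using $\|\phi(x,a)\|\le1$ from Assumption~\ref{assump:linearReward}. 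Hence, for all $(x,a)$,
\begin{align*}
r_2^*(x,a)\ge\widehat r_2(x,a)-\beta_N/\sqrt{\lambda_{\min}(\Sigma_{N,\mathrm{reg}})} .
\end{align*}

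\textbf{Step 2 (integrate).} Because the pointwise bound holds uniformly over $(x,a)$, taking $\mathbb{E}_{\tilde\pi}$ preserves it and gives the first display. The bound does not depend on $\tilde\pi$, so the fact that $\tilde\pi$ depends on the data through $\widehat\theta_2$ causes no issue.

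\textbf{Step 3 (Slater slack).} Define
\begin{align*}
L\coloneqq\mathbb{E}_{\tilde\pi}[\widehat r_2]-\beta_N/\sqrt{\lambda_{\min}(\Sigma_{N,\mathrm{reg}})} .
\end{align*}
If $L>J_{\min}$, then $\widehat\rho=\tfrac12(L-J_{\min})>0$. Moreover,
\begin{align*}
\mathbb{E}_{\tilde\pi}[r_2^*]\ge L=J_{\min}+2\widehat\rho\ge J_{\min}+\widehat\rho .
\end{align*}
Thus Assumption~\ref{assump:slater} holds with $\tilde\pi$ and slack $\widehat\rho$. The factor $\tfrac12$ is only a conservative choice, and any slack in $(0,L-J_{\min}]$ works.

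There is no real obstacle here. The only care needed is the probability bookkeeping: the cited lemma is stated jointly over $k\in\{1,2\}$ with probability $1-2\delta$, and we invoke only its $k=2$ part, which holds with probability $1-\delta$. We also make sure to use the regularized matrix $\Sigma_{N,\mathrm{reg}}$, not $\Sigma_{\mathcal D_N}$, so that $\lambda_{\min}$ is strictly positive.
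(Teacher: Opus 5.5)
Your proposal is correct and takes essentially the same route as the paper's one-line proof. Both apply Cauchy--Schwarz in the $\Sigma_{N,\mathrm{reg}}$-geometry together with the confidence bound for $\widehat\theta_2$ and $\|\phi(x,a)\|\le 1$; you additionally spell out the $\lambda_{\min}(\Sigma_{N,\mathrm{reg}})$ step, the uniformity over $(x,a)$ that makes the data-dependence of $\tilde\pi$ harmless, and the slack computation.
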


This estimate can be used to bound the optimal dual parameter. Since $J(\pi)$ is strictly concave and the feasible set is convex, the constrained problem admits a unique optimal policy $\pi^\star\in\Pi$.

Consider the dual function $g(\lambda) = {\max_\pi} \mathcal{L}(\pi,\lambda)$. By standard results for KL-regularized objectives (e.g., \citet{zhang2023mathematical}), the maximizer $\pi^*_{\lambda} = {\arg \max}_\pi \mathcal{L}(\pi,\lambda)$ has the Gibbs (Boltzmann) form
\begin{align*}
    \pi^*_\lambda(a|x) = \frac{\pi_0(a|x)\exp\left(\frac{1}{\eta}\left<\theta^*_1 + \lambda \theta^*_2,\phi(x,a) \right> \right)}{Z_\lambda(x)}
\end{align*}
where $Z_\lambda(x)$ is the normalizing constant, also referred to as the partition function. Having a closed-form solution to the dual problem enables an efficient dual-only algorithm. Following \citet{xiong2024iterative}, we assume access to a ``Policy Improvement Oracle''.
\begin{definition}[Policy Improvement Oracle~\citep{xiong2024iterative}]
    For reward function $r : \mathcal{X} \times \mathcal{A} \to \mathbb{R}$ and a reference policy $\pi_0$, for all $x \in \mathcal{X}$, we can compute the Gibbs policy:
    \begin{align*}
        \pi_r(\cdot \mid x) & \coloneqq \arg\max_{\pi \in \Pi} \mathbb{E}_{a \sim \pi(\cdot \mid x)} \left[ r(x, a) - \eta \log \frac{\pi(a \mid x)}{\pi_0(a \mid x)} \right] \\ &\propto \pi_0(\cdot \mid x) \cdot \exp\left( \frac{1}{\eta} r(x, \cdot) \right).       
    \end{align*}
\end{definition}

Next, we analyze the properties of the dual function $g(\cdot)$. By the envelope theorem, we have
\begin{align*}
    g'(\lambda) = \mathbb{E}_{\pi^*_\lambda}[r^*_2(x,a)] - J_{\min}.
\end{align*}
Since $\pi^*_\lambda$ belongs to an exponential family, its mean parameter is Lipschitz continuous given boundedness of its sufficient statistics. Consequently, the derivative of the dual function $g'(\lambda)$ is Lipschitz continuous~\citep{wainwright_graphical_2007,brown_fundamentals_1986}.
\begin{lemma}\label{lem:gradLips}
    The derivative $g'(\lambda)$ is Lipschitz continuous with Lipschitz constant $L = \tfrac{B^2}{\eta}$.
\end{lemma}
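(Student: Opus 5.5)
We plan to compute $g''(\lambda)$ explicitly and bound it uniformly in $\lambda\ge 0$.

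\textbf{Step 1: reduce to a derivative of an expectation.} By the envelope-theorem expression already derived, $g'(\lambda)=\mathbb{E}_{x\sim d_0}\big[\mathbb{E}_{a\sim\pi^*_\lambda(\cdot|x)}[r_2^*(x,a)]\big]-J_{\min}$. For each fixed $x$, $\pi^*_\lambda(\cdot|x)$ is an exponential family in the scalar natural parameter $\lambda/\eta$ with sufficient statistic $r_2^*(x,a)$ and base measure $\pi_0(a|x)\exp(\langle\theta_1^*,\phi(x,a)\rangle/\eta)$. Because $\mathcal{A}$ is finite, $\log Z_\lambda(x)$ is a finite log-sum-exp and hence smooth in $\lambda$. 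Because $\mathcal{X}$ is finite, we may also differentiate under $\mathbb{E}_{x\sim d_0}$ with no further justification.

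\textbf{Step 2: differentiate.} The standard log-partition identities give
\begin{align*}
\frac{d}{d\lambda}\mathbb{E}_{a\sim\pi^*_\lambda(\cdot|x)}[r_2^*(x,a)] = \frac{1}{\eta}\,\mathrm{Var}_{a\sim\pi^*_\lambda(\cdot|x)}\big(r_2^*(x,a)\big).
\end{align*}
To verify this directly, note that $\partial_\lambda \log\pi^*_\lambda(a|x)=\frac{1}{\eta}\big(r_2^*(x,a)-\mathbb{E}_{\pi^*_\lambda}[r_2^*]\big)$. Multiplying by $r_2^*(x,a)$ and summing over $a$ yields the covariance of $r_2^*$ with itself, i.e. its variance. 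Therefore
\begin{align*}
g''(\lambda)=\frac{1}{\eta}\,\mathbb{E}_{x\sim d_0}\big[\mathrm{Var}_{\pi^*_\lambda(\cdot|x)}(r_2^*(x,a))\big]\ge 0.
\end{align*}

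\textbf{Step 3: bound the variance uniformly in $\lambda$.} By Assumption~\ref{assump:linearReward} and Cauchy--Schwarz, $|r_2^*(x,a)|\le\|\theta_2^*\|\,\|\phi(x,a)\|\le B$. Hence $\mathrm{Var}(r_2^*)\le\mathbb{E}[(r_2^*)^2]\le B^2$. This gives $0\le g''(\lambda)\le B^2/\eta$ for every $\lambda$. The mean value theorem then yields $|g'(\lambda)-g'(\lambda')|\le \frac{B^2}{\eta}|\lambda-\lambda'|$.

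\textbf{Expected obstacle.} The only real care needed is in Step 2: correctly identifying the derivative of the Gibbs mean as a variance with the factor $1/\eta$. The rest is routine. The bound in Step 3 is slightly loose, since Popoviciu's inequality would give $B^2/\eta$ as well, with range $2B$ yielding variance $\le B^2$. So the stated constant comes out directly, and no sharper argument is needed.
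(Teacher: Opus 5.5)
Your proof is correct and follows essentially the same route as the paper: both treat $\pi^*_\lambda(\cdot\mid x)$ as an exponential family, identify the derivative of the Gibbs mean (equivalently, the second derivative of $\log Z_\lambda(x)$) as a variance of $r_2^*$ under $\pi^*_\lambda$, and bound that variance by $B^2$ using $|r_2^*|\le B$. Your version is slightly more explicit than the paper's, which bounds $\frac{d^2}{d\lambda^2}\log Z_\lambda(x)=\eta^{-2}\operatorname{Var}$ by $B^2/\eta^2$ and leaves implicit the rescaling $g'(\lambda)=\eta\,\mathbb{E}\bigl[\tfrac{d}{d\lambda}\log Z_\lambda(x)\bigr]-J_{\min}$ that gives $B^2/\eta$; also, your remark that Step 3 is ``slightly loose'' is muddled, since Popoviciu's inequality gives the same $B^2$, but this does not affect the argument.
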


The same properties also hold for the empirical dual function $\widehat{g}(\lambda) = \max_\pi \mathcal{L}(\pi, \lambda; \widehat{\theta}_1, \widehat{\theta}_2)$, where $\mathcal{L}(\pi, \lambda; \widehat{\theta}_1, \widehat{\theta}_2)$ denotes the Lagrangian of the primal problem~\eqref{eq:primal} with the true parameters $\theta^*_1$ and $\theta^*_2$ replaced by their statistical estimates. In this case, $\pi^*_\lambda$ is replaced in the proof by $\widehat{\pi}_\lambda$, the policy that attains the maximum in $\widehat{g}(\lambda)$. Next, we quantify the gap between $g(\lambda)$ and $\widehat{g}(\lambda)$, as well as their derivatives, in terms of the estimation errors of $\theta^*_1$, $\theta^*_2$, and the regularized sample covariance matrix. These bounds make explicit how statistical uncertainty propagates through the dual program.

\begin{lemma}
\label{dual concentrations}
    For any $\lambda \geq 0$ we have with probability at least $1-2\delta$, we have
    \begin{align*}
        |\widehat{g}(\lambda) - g(\lambda)| &\leq \frac{(1+\lambda)\beta_N}{\sqrt{\lambda_{\min}(\Sigma_{N,\mathrm{reg}})}}\\
        |\widehat{g}^\prime(\lambda) - g'(\lambda)| &\leq \frac{\beta_N}{\sqrt{\lambda_{\min}(\Sigma_{N,\mathrm{reg}})}}\left(1+\frac{B(1+\lambda)}{\eta} \right),
    \end{align*}
    where $\lambda_{\min}(\Sigma_{N,\mathrm{reg}}) > 0$ is the smallest eigenvalue of regularized sample covariance matrix.
\end{lemma}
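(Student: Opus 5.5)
We work on the event of probability at least $1-2\delta$ on which the MLE bound holds for both oracles, $\|\widehat\theta_k-\theta_k^*\|_{\Sigma_{N,\mathrm{reg}}}\le\beta_N$ for $k\in\{1,2\}$. The first step is to convert this Mahalanobis bound into a uniform bound on the reward error. Since $\Sigma_{N,\mathrm{reg}}\succeq\lambda_{\min}(\Sigma_{N,\mathrm{reg}})I$, we have $\|v\|_2\le\|v\|_{\Sigma_{N,\mathrm{reg}}}/\sqrt{\lambda_{\min}(\Sigma_{N,\mathrm{reg}})}$. Combining this with Cauchy--Schwarz and $\|\phi\|\le1$ gives
\[
\sup_{x,a}|\widehat r_k(x,a)-r_k^*(x,a)|\le \epsilon_N\coloneqq \beta_N/\sqrt{\lambda_{\min}(\Sigma_{N,\mathrm{reg}})}.
\]
This is the same step as in Corollary~\ref{cor:slaterData}.

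\textbf{Value gap.} Write $g(\lambda)=\max_\pi F_\lambda(\pi)$ and $\widehat g(\lambda)=\max_\pi \widehat F_\lambda(\pi)$. The two objectives differ only in the reward term $\mathbb{E}_\pi[r_1+\lambda r_2]$; the KL term and the $-\lambda J_{\min}$ term are identical. Hence for every $\pi$,
\[
|F_\lambda(\pi)-\widehat F_\lambda(\pi)|\le(1+\lambda)\epsilon_N,
\]
because $\lambda\ge0$. A difference of maxima is bounded by the sup of the difference of the objectives. Concretely, evaluate $\widehat F_\lambda$ at $\pi^*_\lambda$ for one direction and $F_\lambda$ at $\widehat\pi_\lambda$ for the other. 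This gives the first inequality immediately.

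\textbf{Derivative gap.} By the envelope formula, $g'(\lambda)=\mathbb{E}_{\pi^*_\lambda}[r_2^*]-J_{\min}$ and $\widehat g'(\lambda)=\mathbb{E}_{\widehat\pi_\lambda}[\widehat r_2]-J_{\min}$. We split
\[
\widehat g'-g'=\mathbb{E}_{\widehat\pi_\lambda}[\widehat r_2-r_2^*]+\big(\mathbb{E}_{\widehat\pi_\lambda}[r_2^*]-\mathbb{E}_{\pi^*_\lambda}[r_2^*]\big).
\]
The first term is at most $\epsilon_N$, which accounts for the ``$1$'' in the bound.

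The second term is the main obstacle: we must control how the Gibbs policy's mean of $r_2^*$ changes when the natural parameter moves from $\theta_1^*+\lambda\theta_2^*$ to $\widehat\theta_1+\lambda\widehat\theta_2$. Fix $x$ and interpolate the rewards,
\[
r^t=(1-t)(r_1^*+\lambda r_2^*)+t(\widehat r_1+\lambda\widehat r_2),
\]
and let $\pi^t\propto\pi_0\exp(r^t/\eta)$. Exponential-family calculus gives
\[
\frac{d}{dt}\mathbb{E}_{\pi^t}[r_2^*]=\tfrac1\eta\mathrm{Cov}_{\pi^t}\big(r_2^*,\,\delta\big),\qquad \delta\coloneqq\widehat r_1+\lambda\widehat r_2-r_1^*-\lambda r_2^*.
\]
Since $|\delta|\le(1+\lambda)\epsilon_N$ uniformly and $|r_2^*|\le B$, the covariance is at most $B(1+\lambda)\epsilon_N$. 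Centering $r_2^*$ would in fact give the sharper $\mathrm{Std}\le B$, which is enough for the bound. Integrating over $t\in[0,1]$ and averaging over $x\sim d_0$ bounds the second term by $\tfrac{B(1+\lambda)}{\eta}\epsilon_N$, matching the same Lipschitz-in-mean-parameter reasoning used for Lemma~\ref{lem:gradLips}. Summing the two pieces yields the second inequality.
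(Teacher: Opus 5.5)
Your proof is correct and follows essentially the same route as the paper. The value gap is handled by evaluating each objective at the other's maximizer, and the derivative gap uses the same split into $\mathbb{E}_{\widehat\pi_\lambda}[\widehat r_2-r_2^*]$ plus a mean-shift term controlled by exponential-family Lipschitzness. The only difference is cosmetic: you interpolate in scalar reward space and bound $\operatorname{Cov}_{\pi^t}(r_2^*,\delta)\le B(1+\lambda)\epsilon_N$ by Cauchy--Schwarz. The paper instead pulls out $\|\theta_2^*\|\le B$ and uses Lipschitzness of the feature mean $\mathbb{E}_{\pi}[\phi]$ in the natural parameter; your version also keeps the factor $B$ that the paper's final display drops.
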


The bounds above yield practical, data–dependent upper bound on the accuracy of estimating $g(\lambda)$ and its derivative. To decouple these guarantees from a particular sample, note that $\Sigma_N \xrightarrow{\text{a.s.}} \Sigma_\infty$ by the law of large numbers. Moreover, standard covariance concentration for bounded feature differences implies that $\|\Sigma_N-\Sigma_\infty\|_{op}$ is small with high probability, where $\|\cdot\|_{op}$ denotes the operator norm. Using this, the following proposition provides a high–probability, sample–independent change–of–norm relations.

\begin{lemma}
\label{lem:norm-change}
    With probability at least $1-\delta$, the following bounds hold for any $v \in \mathbb{R}^d$:
    \begin{align*}
        \frac{\|v\|_{\Sigma_{N,\mathrm{reg}}}}{\zeta_{\max}(\delta,N)} &\leq \|v\|_2 \leq \frac{\|v\|_{\Sigma_{N,\mathrm{reg}}}}{\zeta_{\min}(\delta,N)},\\
       \zeta_{\min}(\delta,N) \cdot \|v\|_{\Sigma_{N,\mathrm{reg}}^{-1}} &\leq \|v\|_2 \leq \zeta_{\max}(\delta,N) \cdot \|v\|_{\Sigma_{N,\mathrm{reg}}^{-1}}.
    \end{align*}
    Here $\zeta_{\min}(\delta,N)$ and $\zeta_{\max}(\delta,N)$ quantify the deviation of the smallest and largest eigenvalues of $\Sigma_{N,\mathrm{reg}}$ from their asymptotic counterparts, respectively, and are given by
 \begin{align*}
     \zeta_{\max}(\delta,N) &\coloneqq \sqrt{(1 + \overline{\varepsilon}_N(\delta))\lambda_{\max}(\Sigma_{\mathcal{D}_\infty}) + \lambda_{reg}}, \\
    \zeta_{\min}(\delta,N) &\coloneqq \sqrt{(1-\underline{\varepsilon}_N(\delta))\lambda_{\min}(\Sigma_{\mathcal{D}_\infty})  + \lambda_{reg}}.
 \end{align*}
 The error terms are
 \begin{align*}
     \overline{\varepsilon}_N(\delta) &\coloneqq CK^2\left( \sqrt{\frac{d+\log(\frac{2}{\delta})}{N}} + \frac{d+\log(\frac{2}{\delta})}{N}\right), \\
     \underline{\varepsilon}_N(\delta) &\coloneqq \frac{\lambda_{\max}(\Sigma_{\mathcal{D}_\infty})}{\lambda_{\min}(\Sigma_{\mathcal{D}_\infty})}\overline{\varepsilon}_N(\delta).
 \end{align*}
\end{lemma}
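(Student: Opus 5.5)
The plan is to reduce everything to eigenvalue bounds on $\Sigma_{N,\mathrm{reg}}$ and then obtain those bounds from a standard covariance concentration inequality for bounded random vectors.

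\textbf{Step 1 (Rayleigh quotients).} For any symmetric positive definite $M$ and any $v$,
\[
\lambda_{\min}(M)\|v\|_2^2 \le v^\top M v \le \lambda_{\max}(M)\|v\|_2^2 .
\]
Applied to $M=\Sigma_{N,\mathrm{reg}}$, this gives
\[
\frac{\|v\|_{\Sigma_{N,\mathrm{reg}}}}{\sqrt{\lambda_{\max}(\Sigma_{N,\mathrm{reg}})}} \le \|v\|_2 \le \frac{\|v\|_{\Sigma_{N,\mathrm{reg}}}}{\sqrt{\lambda_{\min}(\Sigma_{N,\mathrm{reg}})}} .
\]
Applied to $M=\Sigma_{N,\mathrm{reg}}^{-1}$, whose eigenvalues are the reciprocals, it gives
\[
\sqrt{\lambda_{\min}(\Sigma_{N,\mathrm{reg}})}\,\|v\|_{\Sigma_{N,\mathrm{reg}}^{-1}} \le \|v\|_2 \le \sqrt{\lambda_{\max}(\Sigma_{N,\mathrm{reg}})}\,\|v\|_{\Sigma_{N,\mathrm{reg}}^{-1}} .
\]
So it suffices to show that, on one event of probability at least $1-\delta$,
\[
\zeta_{\min}^2 \le \lambda_{\min}(\Sigma_{N,\mathrm{reg}}), \qquad \lambda_{\max}(\Sigma_{N,\mathrm{reg}}) \le \zeta_{\max}^2 .
\]
Since the regularizer shifts every eigenvalue by exactly $\lambda_{\mathrm{reg}}$, this is equivalent to
\[
\lambda_{\max}(\Sigma_{\mathcal D_N}) \le (1+\overline{\varepsilon}_N)\lambda_{\max}(\Sigma_\infty), \qquad \lambda_{\min}(\Sigma_{\mathcal D_N}) \ge (1-\underline{\varepsilon}_N)\lambda_{\min}(\Sigma_\infty).
\]
(Here $\Sigma_{\mathcal D_\infty}=\Sigma_\infty$.) If $\underline{\varepsilon}_N>1$ the lower bound is vacuous, and we can simply use $\lambda_{\mathrm{reg}}$ instead.

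\textbf{Step 2 (concentration).} The $\Delta_i$ are i.i.d.\ with $\|\Delta_i\|\le 2$ by Assumption~\ref{assump:linearReward}, so they are sub-Gaussian with some parameter $K$. I would invoke the standard covariance estimation bound (Vershynin, Thm.~4.6.1 / Exercise 4.7.3, stated relative to $\|\Sigma_\infty\|_{op}$). With probability at least $1-\delta$,
\[
\|\Sigma_{\mathcal D_N}-\Sigma_\infty\|_{op} \le CK^2\left(\sqrt{\frac{d+\log(2/\delta)}{N}}+\frac{d+\log(2/\delta)}{N}\right)\lambda_{\max}(\Sigma_\infty) = \overline{\varepsilon}_N\,\lambda_{\max}(\Sigma_\infty).
\]

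\textbf{Step 3 (Weyl).} By Weyl's inequality, each eigenvalue moves by at most $\|\Sigma_{\mathcal D_N}-\Sigma_\infty\|_{op}$. This immediately gives
\[
\lambda_{\max}(\Sigma_{\mathcal D_N}) \le (1+\overline{\varepsilon}_N)\lambda_{\max}(\Sigma_\infty)
\]
and
\[
\lambda_{\min}(\Sigma_{\mathcal D_N}) \ge \lambda_{\min}(\Sigma_\infty)-\overline{\varepsilon}_N\lambda_{\max}(\Sigma_\infty) = \left(1-\frac{\lambda_{\max}}{\lambda_{\min}}\overline{\varepsilon}_N\right)\lambda_{\min}(\Sigma_\infty),
\]
which is exactly the definition of $\underline{\varepsilon}_N$. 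Assumption~\ref{assump:identifiability} ensures $\lambda_{\min}(\Sigma_\infty)>0$, so the ratio is well defined.

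The main obstacle is Step 2: fitting the constants and the exact form to a citable sub-Gaussian covariance theorem. In particular, the relative-error form needs $K$ defined so that $\|\langle \Delta,u\rangle\|_{\psi_2}\le K\|\langle\Delta,u\rangle\|_{L^2}$. If the data are degenerate along some direction this is problematic, but Assumption~\ref{assump:identifiability} rules that out. Alternatively, one can absorb the boundedness $\|\Delta\|\le 2$ into $K$ and $C$ and use matrix Bernstein.
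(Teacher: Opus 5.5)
Your proposal is correct and follows essentially the same route as the paper. Both reduce the two norm equivalences to the eigenvalue bounds $\lambda_{\min}(\Sigma_{N,\mathrm{reg}})\ge\zeta_{\min}^2$ and $\lambda_{\max}(\Sigma_{N,\mathrm{reg}})\le\zeta_{\max}^2$, obtain $\|\Sigma_{\mathcal D_N}-\Sigma_\infty\|_{op}\le\overline{\varepsilon}_N\lambda_{\max}(\Sigma_\infty)$ from Vershynin's covariance estimation bound (the paper cites Theorem 4.7.1 with the high-probability Remark 4.7.3, taking $u=\log(2/\delta)$), and finish with Weyl's inequality, which gives exactly the factor $1-\underline{\varepsilon}_N$. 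You also correctly note that $K$ must satisfy the relative condition $\|\langle\Delta,u\rangle\|_{\psi_2}\le K\|\langle\Delta,u\rangle\|_{L^2}$, so it is problem-dependent (of order $1/\sqrt{\lambda_{\min}(\Sigma_\infty)}$ when derived from $\|\Delta\|\le 2$) rather than $O(1)$; the paper's proof glosses over this.
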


Combining Lemma~\ref{lem:norm-change} with Lemma~\ref{dual concentrations} and
using the norm equivalences to replace sample–dependent spectral terms by
population–level quantities yields the following: with probability at least $1-3\delta$ we have
\begin{align*}
|\widehat{g}(\lambda)-g(\lambda)|
&\le \frac{(1+\lambda)\beta_N}{\zeta_{\min}(\delta,N)}, \\ 
|\widehat{g}^\prime(\lambda)-g'(\lambda)|
&\le \frac{\beta_N}{\zeta_{\min}(\delta,N)}
\left(1+\frac{B(1+\lambda)}{\eta}\right).
\end{align*}
For brevity, we define the value and derivative error envelopes $\mathcal{E}_g(\lambda)$ and $\mathcal{E}_{g'}(\lambda)$. Depending on the use case, these envelopes may represent either the data-dependent or the data-independent versions:
\begin{align*}
\mathcal{E}_g(\lambda) &\coloneqq \frac{(1+\lambda)\beta_N}{\left\{\zeta_{\min}(\delta,N)\text{ or }\sqrt{\lambda_{\min}(\Sigma_{N,\mathrm{reg}})}\right\}}, \\
\mathcal{E}_{g'}(\lambda) &\coloneqq \frac{\left(1+B(1+\lambda)\eta^{-1}\right)\beta_N }{\left\{\zeta_{\min}(\delta,N)\text{ or }\sqrt{\lambda_{\min}(\Sigma_{N,\mathrm{reg}})}\right\}}.    
\end{align*}
Finally, since $g'$ is $L$–Lipschitz with $L=B^2/\eta$, these envelopes can be extended uniformly over $[0,\Lambda]$ via a standard $\varepsilon$–net argument.

Next, we establish convexity properties of $g(\lambda)$; the same conclusions apply to $\widehat g(\lambda)$ with the parameter $m_g(\cdot)$ replaced by $m_{\widehat g}(\cdot)$ defined analogously. As in Lemma~\ref{lem:gradLips}, one can further bound the difference between $m_{\widehat g}(\cdot)$ and $m_g(\cdot)$.

\begin{proposition}\label{prop:strict-convexity}
Under the standing assumptions, the dual function $g$ is $m_g(\Lambda)$–strongly convex on $[0,\Lambda]$ where
\begin{align*}
m_g(\Lambda)\coloneqq \frac{1}{\eta}\inf_{\lambda\in[0,\Lambda]}\mathbb{E} \Big[\operatorname{Var}_{a\sim \pi_\lambda^*(\cdot\mid x)}\!\big(r_2^*(x,a)\big)\Big]>0.
\end{align*}
\end{proposition}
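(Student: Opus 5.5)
We compute $g''(\lambda)$ explicitly, identify it as a variance under the Gibbs policy, and then bound that variance below uniformly on $[0,\Lambda]$.

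\textbf{Second derivative as a variance.} From the envelope formula $g'(\lambda)=\mathbb{E}_{\pi^*_\lambda}[r_2^*(x,a)]-J_{\min}$ and the Gibbs form of $\pi^*_\lambda$, for each fixed $x$ the map $\lambda\mapsto \mathbb{E}_{a\sim\pi^*_\lambda(\cdot|x)}[r_2^*(x,a)]$ is the derivative of the log-partition function. Write
\begin{align*}
A_x(\lambda)=\eta\log Z_\lambda(x)=\eta\log\sum_a \pi_0(a|x)\exp\big(\tfrac1\eta(r_1^*+\lambda r_2^*)(x,a)\big).
\end{align*}
Then $A_x'(\lambda)=\mathbb{E}_{\pi^*_\lambda(\cdot|x)}[r_2^*]$ and $A_x''(\lambda)=\tfrac1\eta\operatorname{Var}_{\pi^*_\lambda(\cdot|x)}(r_2^*)$.

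We have $g(\lambda)=\mathbb{E}[A_x(\lambda)]-\lambda J_{\min}$. Since $\mathcal{X},\mathcal{A}$ are finite, this is a finite sum of smooth functions, so differentiation passes through the expectation. This gives
\begin{align*}
g''(\lambda)=\tfrac1\eta\,\mathbb{E}\big[\operatorname{Var}_{a\sim\pi^*_\lambda(\cdot|x)}(r_2^*(x,a))\big].
\end{align*}
(This is also the computation underlying Lemma~\ref{lem:gradLips}.) Hence $g''\ge m_g(\Lambda)$ on $[0,\Lambda]$ by definition of the infimum. Integrating twice, e.g. via $g(\mu)\ge g(\lambda)+g'(\lambda)(\mu-\lambda)+\tfrac{m_g}{2}(\mu-\lambda)^2$ from Taylor's theorem with integral remainder, yields $m_g(\Lambda)$-strong convexity.

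\textbf{Positivity of $m_g(\Lambda)$.} This is the substantive step.

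\emph{Pointwise positivity.} Fix $\lambda$. By Assumption~\ref{assump:fullSupport}, $\pi^*_\lambda(\cdot|x)$ has full support on $\mathcal{A}$ for every $x$, since exponential tilting preserves support. So the variance at $x$ vanishes iff $r_2^*(x,\cdot)$ is constant in $a$. Suppose this held for every $x$ with $d_0(x)>0$. Then $\langle\theta_2^*,\Delta(x;a,a')\rangle=0$ for all such $x,a,a'$, so $\theta_2^{*\top}\Sigma_\infty\theta_2^*=0$. Assumption~\ref{assump:identifiability} then forces $\theta_2^*=0$. But $\theta_2^*=0$ makes $\mathbb{E}_\pi[r_2^*]=0$ for every $\pi$, so the constraint is either vacuous or infeasible. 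Slater (Assumption~\ref{assump:slater}) excludes infeasibility. In the vacuous case $J_{\min}<0$, $g$ is affine, so this case must be excluded as degenerate; I would state the nondegeneracy $\theta_2^*\neq 0$ explicitly as part of the ``standing assumptions''. With it, some $x$ with $d_0(x)>0$ has positive variance, and hence the expectation is positive for each $\lambda$.

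\emph{Uniformity over $[0,\Lambda]$.} The function $\lambda\mapsto \mathbb{E}[\operatorname{Var}_{\pi^*_\lambda}(r_2^*)]$ is continuous, since it is a finite sum of ratios of exponentials with positive denominators. A continuous positive function on the compact interval $[0,\Lambda]$ attains a positive minimum, so $m_g(\Lambda)>0$.

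For a quantitative bound, one can use that $\pi^*_\lambda(a|x)\ge \pi_0(a|x)e^{-2B(1+\Lambda)/\eta}$, since rewards are bounded by $B(1+\lambda)$ in absolute value. This gives a lower bound on the variance by that under $\pi_0$, up to this factor.
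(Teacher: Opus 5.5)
Your proof is correct and follows essentially the same route as the paper: write $g''(\lambda)=\eta^{-1}\mathbb{E}[\operatorname{Var}_{\pi^*_\lambda(\cdot\mid x)}(r_2^*)]$, use the full support of the Gibbs policy to get pointwise positivity, and conclude by continuity on the compact interval $[0,\Lambda]$. Where the paper appeals to an unstated ``positive-measure assumption'' that $r_2^*(x,\cdot)$ is non-constant, you derive this from $\Sigma_\infty \succ 0$ and rightly note that $\theta_2^* \neq 0$ must be added to the standing assumptions, since otherwise Slater forces $J_{\min}<0$, $g$ is affine, and the claim fails.
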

With these properties in place, we can now present the main result of this section.
\begin{theorem}\label{thm:lambdaBound}
Under the standing assumptions, let $\pi^*$ denote the optimal primal policy that solves the primal problem~\eqref{eq:primal}, and let $\lambda^* \coloneqq \arg\min_{\lambda \ge 0} g(\lambda).$
Then $\pi^* = \pi^*_{\lambda^*}$. Moreover, $\lambda^*$ admits the following upper bounds:

\textbf{Deterministic bound.}  Let $B$ be as in Assumption~\ref{assump:linearReward}, and let $\tilde\pi$ and $\rho$ be as in Assumption~\ref{assump:slater}. Define $\Lambda = \frac{B - J(\tilde\pi)}{\rho}.$ We have
\begin{align*}
\lambda^* \le \min\!\left\{\Lambda, \frac{[-g'(0)]_+}{m_g(\Lambda)}\right\}.
\end{align*}
    
\textbf{Data–driven bound.}  
Let $B$ be as in Assumption~\ref{assump:linearReward}, and let $\tilde\pi$ and $\rho$ be as in Corollary~\ref{assump:slater}. Define $\Lambda = \rho^{-1}\!\left(B + \frac{\beta_N}{\sqrt{\lambda_{\min}(\Sigma_{N,\mathrm{reg}})}} - \widehat J(\tilde\pi)\right).$ Then, with probability at least $1-3\delta$,
\begin{align*}
\lambda^* \le \min\!\left\{\Lambda, \frac{[-\widehat g'(0)+\mathcal{E}_{g'}(0)]_+}{m_g(\Lambda)}\right\},
\end{align*}
where $\widehat F(\cdot)$ is defined analogously to $F(\cdot)$ but with $\theta_1^*$ and $\theta_2^*$ replaced by their estimates.
\end{theorem}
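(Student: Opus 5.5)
We first establish $\pi^*=\pi^*_{\lambda^*}$ and then derive the two upper bounds on $\lambda^*$: one from Slater's condition and one from strong convexity of $g$.

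\textbf{Identifying the optimal policy.} We use strong duality, which holds by Assumption~\ref{assump:slater}. The primal problem~\eqref{eq:primal} is strictly concave with an affine constraint, so a dual minimizer $\lambda^*\ge 0$ exists. The KKT conditions state that $\pi^*$ maximizes $\mathcal{L}(\cdot,\lambda^*)$ and that complementary slackness $\lambda^* c(\pi^*)=0$ holds. Since $\mathcal{L}(\cdot,\lambda^*)$ is strictly concave in $\pi$, by strict convexity of the KL term, its maximizer is unique. By the Gibbs form given by the Policy Improvement Oracle, that maximizer is $\pi^*_{\lambda^*}$. Hence $\pi^*=\pi^*_{\lambda^*}$.

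\textbf{Slater bound.} For the first term, the standard Slater argument applies. Using $g(\lambda^*)\le g(0)$ is not quite what we need; instead we use $g(\lambda^*)=J(\pi^*)$, so that
\begin{align*}
J(\pi^*) = g(\lambda^*) \ge \mathcal{L}(\tilde\pi,\lambda^*) = J(\tilde\pi)-\lambda^* c(\tilde\pi) \ge J(\tilde\pi)+\lambda^*\rho.
\end{align*}
Since $J(\pi^*)\le \mathbb{E}_{\pi^*}[r_1^*]\le B$ (the KL term is nonnegative and $|r_1^*|\le\|\theta_1^*\|\|\phi\|\le B$), we get $\lambda^*\le (B-J(\tilde\pi))/\rho=\Lambda$.

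\textbf{Strong-convexity bound.} For the second term, we may assume $\lambda^*>0$, since otherwise the bound is trivial. Then $\lambda^*\in(0,\Lambda]$, and first-order optimality gives $g'(\lambda^*)=0$. By Proposition~\ref{prop:strict-convexity}, $g$ is $m_g(\Lambda)$-strongly convex on $[0,\Lambda]$, so
\begin{align*}
g'(\lambda^*)-g'(0)\ge m_g(\Lambda)\lambda^*.
\end{align*}
This yields $\lambda^*\le -g'(0)/m_g(\Lambda)\le [-g'(0)]_+/m_g(\Lambda)$. Note that if $-g'(0)\le 0$, then monotonicity of $g'$ forces $\lambda^*=0$, consistent with the bound.

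\textbf{Data-driven bound.} We repeat the same two arguments with estimated quantities, working on the joint high-probability event of Corollary~\ref{cor:slaterData} and Lemmas~\ref{dual concentrations}--\ref{lem:norm-change}, which has probability at least $1-3\delta$.

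On this event, the Slater step uses $\tilde\pi$ greedy for $\widehat r_2$ with slack $\rho$ (the true slack certified via Corollary~\ref{cor:slaterData}). It also replaces $J(\tilde\pi)$ by its lower bound $J(\tilde\pi)\ge \widehat J(\tilde\pi)-\beta_N/\sqrt{\lambda_{\min}(\Sigma_{N,\mathrm{reg}})}$. This bound follows from $|\langle\widehat\theta_1-\theta_1^*,\phi\rangle|\le \|\widehat\theta_1-\theta_1^*\|_2\le \beta_N/\sqrt{\lambda_{\min}}$, noting that the KL term is identical in $J$ and $\widehat J$. Substituting gives the stated $\Lambda$.

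For the second term, Lemma~\ref{dual concentrations} at $\lambda=0$ gives $-g'(0)\le -\widehat g'(0)+\mathcal{E}_{g'}(0)$. Since $[\cdot]_+$ is monotone, the bound follows, still with the true $m_g(\Lambda)$.

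The main obstacle is bookkeeping: ensuring that the $\Lambda$ used in the strong-convexity step is a valid upper bound on $\lambda^*$ on the same event, so that $\lambda^*\in[0,\Lambda]$ where Proposition~\ref{prop:strict-convexity} applies. We also need the union bound over the estimation events to total $3\delta$. Both are handled by intersecting the events stated above.
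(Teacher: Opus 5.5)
Your proposal is correct and follows essentially the same route as the paper: strong duality and the saddle-point property plus uniqueness of the Gibbs maximizer to get $\pi^*=\pi^*_{\lambda^*}$, the Slater chain $B\ge g(\lambda^*)\ge \mathcal{L}(\tilde\pi,\lambda^*)\ge J(\tilde\pi)+\lambda^*\rho$, strong convexity of $g$ on $[0,\Lambda]$ combined with $g'(\lambda^*)=0$, and the concentration bounds for the data-driven version. Your write-up is slightly more careful than the paper's in two places: you treat the case $\lambda^*=0$ separately, and you state the inequality in the correct direction, $-g'(0)\le -\widehat g'(0)+\mathcal{E}_{g'}(0)$, whereas the paper writes this step with the inequality reversed.
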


\section{Algorithm}
We now present our dual-only algorithm for solving the constrained RLHF problem. Our approach exploits the closed-form solution of the KL-regularized objective to reduce the constrained optimization to a one-dimensional convex problem over the dual variable. The unique Gibbs form of the optimal policy eliminates the need for complex primal-dual iterations. Instead, we minimize the dual via projected gradient descent on a high-probability domain $[0,R]$ for $\lambda^*$. With MLEs $\widehat{\theta}_1,\widehat{\theta}_2$ and a step size $\alpha=1/L=\eta/B^2$ (from the Lipschitz constant of $g'$), the algorithm outputs an approximately optimal dual parameter that induces the corresponding Gibbs policy. Each iteration of the algorithm performs three steps: (i) form the current Gibbs policy, (ii) estimate the dual gradient, and (iii) take a projected gradient step, ensuring $\lambda$ remains in the range where our guarantees apply.

\begin{algorithm}[t]
\caption{Projected Gradient Descent (Dual)}
\label{alg:pgd-dual}
\begin{algorithmic}[1]
\REQUIRE MLEs $\widehat{\theta}_1,\widehat{\theta}_2$ from $\mathcal{D}$; step size $\alpha$; constraint level $J_{\min}$; projection radius $R$; iterations $T$
\ENSURE Approximate dual minimizer $\bar{\lambda}_T$
\STATE Initialize $\lambda_0 \gets 0$
\FOR{$t=0$ to $T-1$}
    \STATE {Policy Optimization:} $$\widehat{\pi}_{\lambda_t}(a|x)
    \propto
    \pi_0(a|x)\exp\big(\tfrac{1}{\eta}\langle \widehat{\theta}_1+\lambda_t \widehat{\theta}_2,\phi(x,a)\rangle\big).$$
    \STATE {Gradient Estimation:} $$\widehat{g}^\prime(\lambda_t)
    \gets
    \mathbb{E}_{x\sim d_0}\mathbb{E}_{a\sim \widehat{\pi}_{\lambda_t}(\cdot|x)}
    \big[\langle \widehat{\theta}_2,\phi(x,a)\rangle\big]-J_{\min}.$$
    \STATE {Projected Gradient Descent:} $$\lambda_{t+1} \gets \text{Proj}_{[0,R]}\big(\lambda_t - \alpha\widehat{g}^\prime(\lambda_t)\big).$$
\ENDFOR
\STATE \textbf{Return:} $\bar{\lambda}_T \gets \frac{1}{T}\sum_{i=0}^{T-1} \lambda_t$
\end{algorithmic}
\end{algorithm}

\begin{theorem}\label{thm:main}
    Under the standing assumptions, Algorithm~\ref{alg:pgd-dual}, with projection radius $R$ chosen as in Theorem~\ref{thm:lambdaBound} and step size $\alpha = \tfrac{\eta}{B^2}$, yields:
    \begin{align*}
        &g(\bar{\lambda}_T) - g(\lambda^*) \leq 2\mathcal{E}_g(R) + \frac{B^2 R^2}{2\eta T}, \\
        &(J_{\min} - \mathbb{E}_{\pi^*_{\bar{\lambda}_T}}[r^*_2(x,a)])_{+} \leq\mathcal{E}_{g'}(R) + \frac{B^2R}{\eta \sqrt{T}},\\
        &J(\pi^*) - J(\pi^*_{\bar{\lambda}_T}) \leq 2\mathcal{E}_g(R) + \frac{B^2 R^2}{2\eta T} + R\mathcal{E}_{g'}(R) + \frac{B^2R^2}{\eta \sqrt{T}},
    \end{align*}
    with probability inherited from choice of $R$.
\end{theorem}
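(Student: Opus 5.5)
We treat Algorithm~\ref{alg:pgd-dual} as projected gradient descent on the \emph{empirical} dual $\widehat g$ over $[0,R]$, and transfer every guarantee to the true dual $g$ through the envelopes of Lemma~\ref{dual concentrations} (made uniform over $[0,R]$ by the $\varepsilon$-net remark). The choice of $R$ from Theorem~\ref{thm:lambdaBound} ensures $\lambda^*\in[0,R]$ on the same high-probability event. We also take $\widehat g'$ to be $L$-Lipschitz with $L=B^2/\eta$, as the analogue of Lemma~\ref{lem:gradLips}, and $\widehat g$ to be convex.

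\textbf{Step 1 (dual optimality gap).} Let $\widehat\lambda^*=\arg\min_{[0,R]}\widehat g$. The standard PGD analysis for a convex, $L$-smooth function with step $1/L$ on a compact interval gives an $O(LR^2/T)$ rate for the average iterate. To get this for $\bar\lambda_T$, we use the telescoping inequality
\[
\widehat g(\lambda_{t+1})-\widehat g(\widehat\lambda^*)\le \tfrac{L}{2}\bigl(|\lambda_t-\widehat\lambda^*|^2-|\lambda_{t+1}-\widehat\lambda^*|^2\bigr),
\]
sum it over $t$, and apply Jensen. This yields
\[
\widehat g(\bar\lambda_T)-\min_{[0,R]}\widehat g\le \frac{LR^2}{2T}=\frac{B^2R^2}{2\eta T},
\]
up to an index shift, since $|\lambda_0-\widehat\lambda^*|\le R$. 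Then
\[
g(\bar\lambda_T)-g(\lambda^*)\le [g-\widehat g](\bar\lambda_T)+\bigl(\widehat g(\bar\lambda_T)-\widehat g(\widehat\lambda^*)\bigr)+[\widehat g-g](\lambda^*),
\]
because $\widehat g(\widehat\lambda^*)\le\widehat g(\lambda^*)$. Each difference term is at most $\mathcal{E}_g(R)$, since $\mathcal{E}_g$ is increasing in $\lambda$. This gives the first bound.

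\textbf{Step 2 (constraint violation).} The constraint residual equals $-g'(\bar\lambda_T)$, so we need $[-g'(\bar\lambda_T)]_+$. We first bound $[-\widehat g'(\bar\lambda_T)]_+$ using the first-order optimality of $\widehat\lambda^*$ on $[0,R]$, which gives $\widehat g'(\widehat\lambda^*)\ge0$ unless $\widehat\lambda^*=R$. We combine this with a near-stationarity bound for PGD: for a smooth convex function, the gradient mapping $G_t=(\lambda_t-\lambda_{t+1})/\alpha$ satisfies $\min_t|G_t|\lesssim LR/\sqrt T$. Transferring from the iterates to $\bar\lambda_T$ via $L$-Lipschitzness of $\widehat g'$ produces the term $B^2R/(\eta\sqrt T)$. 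Finally, $[-g']_+\le[-\widehat g']_++\mathcal{E}_{g'}(R)$ by Lemma~\ref{dual concentrations}.

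\textbf{This is the main obstacle.} Gradient information at the averaged iterate is not directly controlled by function-value convergence. The boundary case where $\lambda$ sits at $R$ with a negative gradient must also be ruled out. That case should be excluded by $\lambda^*<R$ together with the value gap from Step 1. If the averaging argument fails, we would instead use $L$-smoothness: $\tfrac{1}{2L}|\widehat g'(\lambda)|^2\le \widehat g(\lambda)-\min\widehat g$ for interior minimizers. This also gives a rate in $T$ of the stated type.

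\textbf{Step 3 (primal suboptimality).} By strong duality, $J(\pi^*)=g(\lambda^*)$. By definition of the Lagrangian,
\[
g(\bar\lambda_T)=J(\pi^*_{\bar\lambda_T})+\bar\lambda_T\bigl(\mathbb{E}_{\pi^*_{\bar\lambda_T}}[r_2^*]-J_{\min}\bigr).
\]
Hence
\[
J(\pi^*)-J(\pi^*_{\bar\lambda_T})=\bigl(g(\lambda^*)-g(\bar\lambda_T)\bigr)+\bar\lambda_T\, g'(\bar\lambda_T).
\]
The first term is nonpositive, but we simply bound it, together with the rest, by the Step~1 quantity. For the second term, $\bar\lambda_T g'(\bar\lambda_T)$ is bounded using $\bar\lambda_T\le R$ and $|g'(\bar\lambda_T)|$. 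The negative part of $g'$ is controlled by Step~2. The positive part arises only when the constraint is slack, and is controlled by the same near-stationarity argument applied to $|g'(\bar\lambda_T)|$. This gives $R\,\mathcal{E}_{g'}(R)+B^2R^2/(\eta\sqrt T)$, and summing yields the third bound. All statements hold on the intersection of the MLE event and the event defining $R$, which gives the inherited probability.
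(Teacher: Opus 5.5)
Your proposal is essentially the paper's proof. The dual gap uses the same three-term decomposition with $\widehat g(\widehat\lambda^*)\le\widehat g(\lambda^*)$. The violation uses $\mathcal{E}_{g'}(R)$ plus the smoothness bound $|\widehat g'(\bar\lambda_T)|\le\sqrt{2L(\widehat g(\bar\lambda_T)-\widehat g(\widehat\lambda^*))}\le B^2R/(\eta\sqrt T)$. The primal gap uses the identity $J(\pi^*_{\bar\lambda_T})=g(\bar\lambda_T)-\bar\lambda_T g'(\bar\lambda_T)$ with strong duality and $\bar\lambda_T\le R$.

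In Step 2, it is your fallback that matches the paper and works, not your primary best-iterate/gradient-mapping route. Lipschitzness of $\widehat g'$ cannot transfer a small gradient from some $\lambda_t$ to the average, because $|\bar\lambda_T-\lambda_t|$ is not controlled. The interior-minimizer and index-shift caveats you flag are also passed over silently in the paper's own proof.
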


The explicit finite-sample bounds in Theorem~\ref{thm:main} demonstrate a trade-off between statistical error $O(\sqrt{d/N})$ and optimization error $O(1/\sqrt{T})$. In practice, $T$ should be chosen so that the optimization error matches the statistical error, yielding a balanced trade-off among estimation accuracy, data coverage, constraint slack, and algorithmic complexity.

\section{Simulation}
We employ the PKU-SafeRLHF dataset~\citep{ji2024beavertails}, which comprises approximately $74{,}000$ training and $7{,}000$ test examples. The data is structured as tuples of (prompt, response$_0$, response$1$, safer, better), where prompts originate from diverse sources and responses are generated by Alpaca-7B, Alpaca2-7B, or Alpaca3-8B. The binary labels $\text{safer}, \text{better} \in \{0,1\}$ indicate human preferences regarding safety and helpfulness. We process the text by concatenating the prompt and response, encoding the sequence with Sentence-BERT (all-mpnet-base-v2)~\citep{reimers2019sentence}, and L2-normalizing each feature vector to satisfy $\|\phi(x,a)\|_2 = 1$, as required by Assumption~\ref{assump:linearReward}. We estimate the reward parameters $\widehat{\theta}_1, \widehat{\theta}_2 \in \mathbb{R}^{768}$ via regularized Bradley-Terry maximum likelihood, optimized using L-BFGS-B with $\lambda{\mathrm{reg}} = 0.01$.

To construct the reference policy $\pi_0$, we generate forward passes using Alpaca-7B~\citep{taori2023alpaca} for each response pair. For every prompt and response, we compute the log-likelihood $\log \mathbb{P}(\text{response} \mid \text{prompt}) = \sum_{t} \log \mathbb{P}(\text{token}_t \mid \text{tokens}_{<t}, \text{prompt})$, which is normalized by the response length to account for token variation. Using the normalized log-likelihoods $\ell_0$ and $\ell_1$, we derive the policy probabilities via log-sum-exp normalization for numerical stability:
\begin{align*}
\ell_{\max} &= \max{\ell_0, \ell_1}, \\
\log Z &= \ell_{\max} + \log\big(\exp(\ell_0 - \ell_{\max}) + \exp(\ell_1 - \ell_{\max})\big),
\end{align*}
yielding $\pi_0(\text{response}_i \mid \text{prompt}) = \exp(\ell_i - \log Z)$ for $i \in \{0,1\}$. This establishes a per-example reference distribution based on the pre-alignment behavior of Alpaca-7B. We then calibrate the constraint threshold as $J_{\min} = \mathbb{E}_{\pi_0}[\hat{r}_2] + 0.7 \cdot (\mathbb{E}_{\text{greedy}}[\hat{r}_2] - \mathbb{E}_{\pi_0}[\hat{r}_2])$, requiring the learned policy to bridge $70\%$ of the gap in helpfulness reward. Finally, we implement Algorithm~\ref{alg:pgd-dual} with $\eta = 0.3$, projection radius $R = 100$, and $T = 1000$ iterations. To address the high-dimensional feature space and small normalized rewards, we employ an adaptive step size $\alpha_t$ that scales with the distance to the constraint boundary:
\begin{align*}
\alpha_t = \min\left\{\frac{\eta \cdot m(|J_{\min} - \mathbb{E}_{\pi_{\lambda_t}}[\widehat{r}_2]|)}{B^2\sqrt{\epsilon + \sum_{i=1}^{t} [g'(\lambda_i)]^2}}, 1.0\right\},
\end{align*}
where $m(\cdot)$ is a monotone increasing multiplier function of the constraint gap, ranging from $100$ to $10{,}000$. We set $\epsilon = 10^{-8}$ as a regularization constant and cap the maximum step size at $\alpha_{\max} = 1.0$. Although this adaptive approach deviates from the theoretical step size $\alpha = \eta/B^2$, it substantially accelerates convergence in practice. Throughout the process, we track both the instantaneous dual variable $\lambda_t$ and the time-averaged value $\lambda_{\bar{T}} = \frac{1}{T}\sum_{t=0}^{T-1} \lambda_t$, utilizing the latter to define the final policy as prescribed by theory.As shown in Figure \ref{fig:constraint violation}, the constraint violation decays rapidly, with similar convergence observed for $\hat{r}_1$ and $\hat{r}_2$. This acceleration is primarily driven by our adaptive step size; while theoretical guarantees hold with the standard $\alpha = \eta/B^2$, the adaptive method is often more practical for application. 

As illustrated in Figure \ref{fig:policy shift}, the optimization process fundamentally alters the decision boundary. Whereas $\pi_0$ remains indecisive (uniform) restricted to the dataset responses, $\pi_{\bar{\lambda}}$ adopts a decisive, near-deterministic stance. Furthermore, Figure \ref{fig:policy shift vs rewards} validates the accuracy of this shift. We observe a strong correlation between the direction of the probability shift and the reward differential: the policy consistently concentrates weight on Response 0 or Response 1 roughly in proportion to their relative reward advantage.

Figure \ref{fig:policy shift} shows that the reference policy $\pi_0$ remains largely indifferent between response options, with probabilities concentrated near $0.5$. In contrast, the optimized policy $\pi_{\bar{\lambda}}$ exhibits substantial probability reallocation, frequently pushing mass toward extreme values and yielding more decisive choices. This behavior is directionally aligned with the learned reward signals: as shown in Figure \ref{fig:policy shift vs rewards}, when the estimated reward difference favors a particular response, $\pi_{\bar{\lambda}}$ consistently shifts probability toward that response. These shifts are reflected in the quantitative results in Table~2, where the optimized policy attains markedly fewer violations while maintaining competitive safety.

To demonstrate the practical behavior of the algorithm when using the basic step size \(\alpha=\frac{\eta}{B^2}\), we include supplementary simulations on synthetic data in Appendix~\ref{app:sim}.

\begin{figure*}[h]
    \centering
    \begin{subfigure}[b]{0.35\textwidth}
        \centering
        \includegraphics[width=0.95\textwidth]{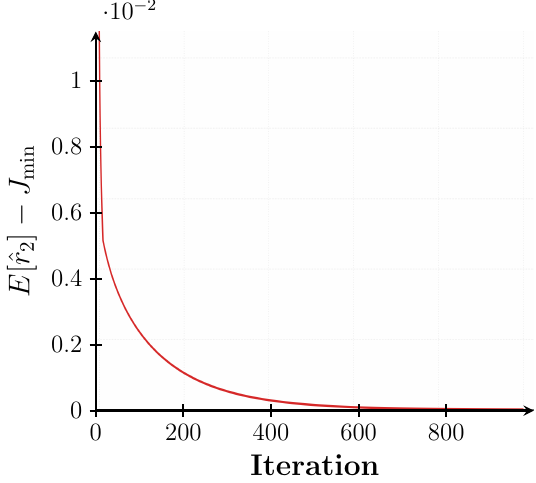}
        \caption{Constraint violation}
        \label{fig:constraint violation}
    \end{subfigure}
    \hfill
    \begin{subfigure}[b]{0.6\textwidth}
        \centering
        \includegraphics[width=0.95\textwidth]{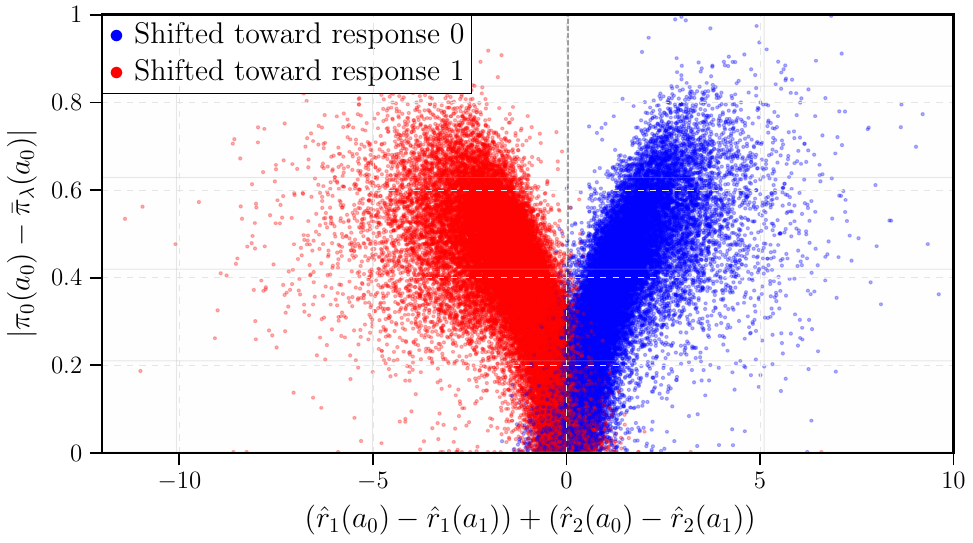}
        \caption{Policy shift v.s. reward difference}
        \label{fig:policy shift vs rewards}
    \end{subfigure}
    \caption{}
    \label{fig:results}
\end{figure*}

\begin{figure}[h]
    \centering
    \begin{subfigure}[b]{0.5\textwidth}
        \centering
        \includegraphics[width=1\textwidth]{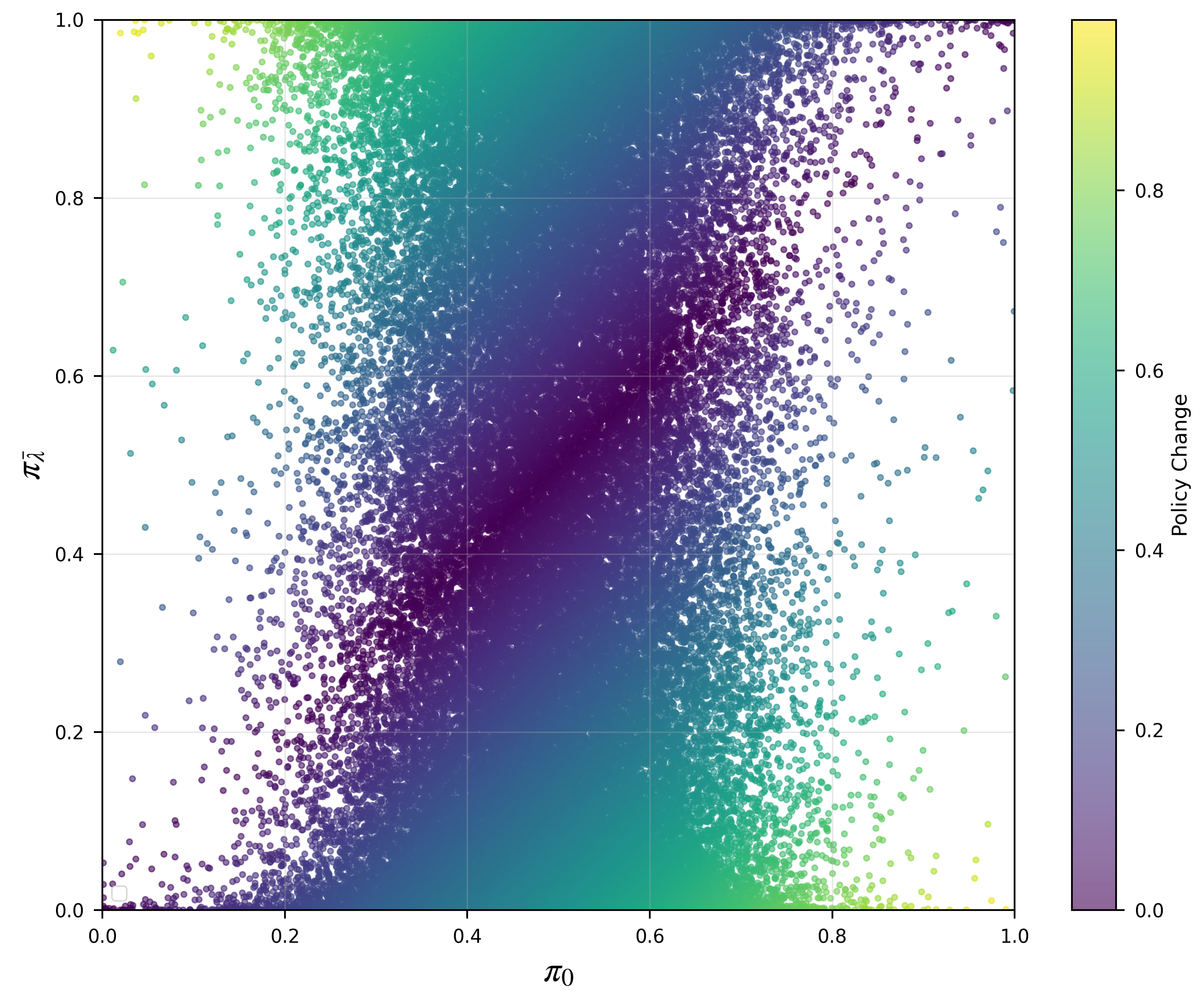}
    \end{subfigure}
    \caption{Policy shift}
    \label{fig:policy shift}
\end{figure}

\begin{table}[h]
\centering
\caption{Performance comparison on PKU-SafeRLHF dataset}
\label{tab:performance}
\begin{tabular}{lccc}
\toprule
& \multicolumn{2}{c}{\textbf{Training (74k)}} & \textbf{Test (7k)} \\
\cmidrule(lr){2-3} \cmidrule(lr){4-4}
\textbf{Metric} & $\pi_0$ & $\pi_{\bar{\lambda}}$ & $\pi_{\bar{\lambda}}$ \\
\midrule
Safety $\mathbb{E}[r_1]$ & -1.8997 & -1.6034 & -1.6044 \\
Helpfulness $\mathbb{E}[r_2]$ & -0.8867 & -0.6479 & -0.6287 \\
Violation & 0.2394 & 0.0006 & 0.0000 \\
\midrule
\multicolumn{4}{l}{\textbf{Parameters:} $J_{\min} = -0.6473$, $\eta = 0.3$, $T = 1000$} \\
\bottomrule
\end{tabular}
\end{table}

\section{Extensions}\label{sec:extension}
\subsection{Multiple Constrained Oracles}
This work readily extends to the setting of $m$ constrained oracles. The Lagrangian becomes $\mathcal{L}(\pi,\boldsymbol{\lambda}) = \mathbb{E}_\pi [r^*_1 + \sum_{k=2}^{m+1} \lambda_kr^*_k] - \eta D_{\text{KL}}(\pi||\pi_0) - \sum_k \lambda_k J_{k,\min}$ and all theoretical results extend with $O(m)$ dependence in error bounds. Algorithm ~\ref{alg:pgd-dual} becomes projected gradient descent in $\mathbb{R}^m_{+}$ with iteration complexity reflecting this. This is a consequence of our MLE rates holding under arbitrary dependence. The log-likelihood for oracle $k$ is $\ell_{\mathcal{D}_N}^{(k)}(\theta_1,\theta_2) =   \sum_{i=1}^N \ell_{i}^{(k)}(\theta_k)$ and we maintain that with probability at least $1-(m+1)\delta$ for every $k \in \{1,2,...m+1\}$,
\begin{align*}
\|\widehat\theta_k-\theta_k^*\|_{\Sigma_{N,\mathrm{reg}}}\le C\sqrt{\frac{d+\log(1/\delta)}{\gamma^2N}+\lambda_{\mathrm{reg}}B^2}.
\end{align*}
The optimal policy similarly remains of the form
\begin{align*}
    \pi^*_{\boldsymbol{\lambda}}(a|x) = \frac{\pi_0(a|x)\exp\left(\frac{1}{\eta} \langle \theta^*_1 + \sum_{k=2}^{m+1}\lambda_k \theta^*_k, \phi(x,a)  \rangle \right)}{Z_{\boldsymbol{\lambda}}(x)}.
\end{align*}
From this note how the bounds in Lemma \ref{dual concentrations} rely only on the individual MLE concentrations, and all later results hold with their $\mathbb{R}^m$ analogues.

\subsection{General Divergence}
Our dual reduction with closed-form policy extends to any $f$-divergence where the regularized objective admits a closed form KKT characterization. Specifically we have the following proposition.
\begin{proposition}\label{prop:f-div}
    Let previous assumptions hold and consider an $f$ divergence where $f$ is strictly convex on $\mathbb{R}_{+}$, differentiable on $(0,\infty)$, and $f(1) = 0$. Then under $D_f$ regularization the optimal policy takes the form
    \begin{align*}
        \pi^*(a | x) = \pi_0(a|x)\left[ (f')^{-1}\left( \frac{r^*_\lambda(x,a) - \tau_x}{\eta} \right) \right]_{+}
    \end{align*}
    for $r^*_\lambda(x,a) = r^*_1(x,a) + \lambda r^*_2(x,a)$.
\end{proposition}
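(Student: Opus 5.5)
Fix $\lambda\ge 0$ and maximize the $f$-divergence Lagrangian pointwise in $x$. Since the prompt space is finite and the objective separates over $x$, it suffices, for each fixed $x$, to solve
\begin{align*}
\max_{p\in\Delta(\mathcal{A})}\ \sum_{a} p(a)\, r^*_\lambda(x,a) \;-\; \eta \sum_a \pi_0(a|x)\, f\!\left(\frac{p(a)}{\pi_0(a|x)}\right),
\end{align*}
where $r^*_\lambda = r^*_1+\lambda r^*_2$ and the constant $-\lambda J_{\min}$ has been dropped. Assumption~\ref{assump:fullSupport} gives $\pi_0(a|x)>0$, so the change of variables $u_a = p(a)/\pi_0(a|x)\ge 0$ is legitimate. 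The problem becomes maximizing a strictly concave function of $u$, because $f$ is strictly convex, over the compact convex set $\{u\ge 0,\ \sum_a \pi_0(a|x)u_a = 1\}$. A maximizer therefore exists and is unique, and the constraints are affine, so Slater's condition holds trivially (take $u\equiv 1$). The KKT conditions are then necessary and sufficient.

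Next I would write the KKT system. Introduce a multiplier $\tau_x$ for the normalization constraint and $\nu_a\ge 0$ for $u_a\ge 0$. Stationarity at any coordinate with $u_a>0$ reads
\begin{align*}
\pi_0(a|x)\,\big(r^*_\lambda(x,a) - \eta f'(u_a) - \tau_x\big) + \nu_a = 0,
\end{align*}
with $\nu_a u_a = 0$. Hence on the support, $f'(u_a) = (r^*_\lambda(x,a)-\tau_x)/\eta$. Strict convexity makes $f'$ strictly increasing on $(0,\infty)$, so it has an inverse on its range, and $u_a = (f')^{-1}\big((r^*_\lambda(x,a)-\tau_x)/\eta\big)$.

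For coordinates with $u_a=0$, stationarity has to be interpreted through the one-sided derivative. Complementary slackness gives $r^*_\lambda(x,a)-\tau_x \le \eta f'(0^+)$. Extending $(f')^{-1}$ by $0$ below $f'(0^+)$, which is exactly what the positive part $[\cdot]_+$ encodes, puts both cases into the single formula $\pi^*(a|x) = \pi_0(a|x)\big[(f')^{-1}\big((r^*_\lambda-\tau_x)/\eta\big)\big]_+$.

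Finally, $\tau_x$ is pinned down by normalization. The map $\tau\mapsto \sum_a \pi_0(a|x)\big[(f')^{-1}\big((r^*_\lambda(x,a)-\tau)/\eta\big)\big]_+$ is continuous and nonincreasing, strictly decreasing wherever it is positive. It tends to $0$ as $\tau\to\infty$ and exceeds $1$ for $\tau$ small enough (for instance $\tau < \min_a r^*_\lambda(x,a) - \eta f'(1)$). A unique $\tau_x$ with total mass $1$ therefore exists. Taking $\lambda=\lambda^*$ from strong duality, as in Theorem~\ref{thm:lambdaBound}, identifies this with the constrained optimum $\pi^*$.

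The main obstacle is the boundary behaviour. Either $f'(0^+)$ is finite, which allows actions with zero probability and requires the careful complementary-slackness treatment with a suitable extension of $(f')^{-1}$, or $f'(0^+)=-\infty$, as in the KL case where the solution is interior. A related issue is that the range of $f'$ may be bounded above, so I would need to check that $(r^*_\lambda-\tau_x)/\eta$ stays in the domain of $(f')^{-1}$. I would first handle this by restricting $\tau_x$ to the interval where the normalization map is finite, and argue that it is well defined there.
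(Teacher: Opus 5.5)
Your proposal is correct and follows essentially the same route as the paper's proof. Both fix $x$, write the KKT system with a normalization multiplier and nonnegativity multipliers, invert the strictly increasing $f'$ on the support, encode zero-probability actions through $[\cdot]_+$, and pin down $\tau_x$ by normalization using monotonicity. Your version is more careful than the paper's on three points: the boundary case (the one-sided derivative $f'(0^+)$ and extending $(f')^{-1}$ by $0$), existence of $\tau_x$ (checking the limits of the mass map), and identifying the per-$\lambda$ maximizer with the constrained optimum via $\lambda^*$.
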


An immediate consequence of the proposition above is that for the Pearson $\chi^2$-divergence and the $\alpha$-divergence ($\alpha > 0, \alpha \neq 1$), the optimal policy takes the following forms:
\begin{align*}
    \pi^*_{\lambda}(a|x) &= \pi_0(a|x)\left[1+\frac{r^*_1 + \lambda r^*_2 - \tau_x}{2\eta} \right]_{+}, \\
    \pi^*_{\lambda}(a|x) &= \pi_0(a|x)\left[1+\frac{(\alpha-1)(r^*_1 + \lambda r^*_2 - \tau_x)}{\eta} \right]_{+}^{\frac{1}{\alpha-1}}.
\end{align*}
The KL divergence (corresponding to the limit $\alpha \to 1$) is the most common choice in RLHF, which is why we focus on it in the main text. However, analogous forms exist for other common choices of $f$ satisfying our assumptions. Given these explicit forms for the optimal policy, the analysis presented in Lemma \ref{lem:gradLips} and Proposition \ref{prop:strict-convexity} extends to the general $f$-divergence setting.

\section{Conclusion and Future Work}
We proposed and analyzed a dual-only approach for offline constrained RLHF with multiple preference oracles: after MLE reward estimation from pairwise data, the KL-regularized primal admits a closed-form Gibbs policy and learning reduces to convex optimization in the dual. We provided the first finite-sample bounds in this setting, separating statistical error ($\approx O(\sqrt{d/N})$, driven by coverage) from optimization error ($\approx O(1/\sqrt{T})$), and gave data-dependent certificates for the dual radius and constraint satisfaction. Empirically, synthetic simulations and experiments on PKU-SafeRLHF align with the theory: the learned Gibbs policy achieves near-zero constraint violation on test data while improving target-group reward, and an adaptive step-size accelerates convergence without undermining final feasibility. As future work, we aim to extend our results to richer preference models, exploit cross-oracle dependence via joint estimation, and develop online constrained RLHF.

\bibliographystyle{plainnat}
\bibliography{references}
\newpage
\appendix
\section{Synthetic Experiments}\label{app:sim}
We simulate in a finite prompt-action environment where features are drawn as $\phi(x,a) \sim \mathcal{N}(0,I_d)$ and normalized to unit $L_2$-norm. Ground-truth parameters $\theta^*_1$ and $\theta^*_2$ are independently sampled from $\mathcal{N}(0,I_d)$ and normalized. We set $\theta_0 = w \theta^*_1 + (1-w)\theta^*_2$ and define $\pi_0(a|x) \propto \exp(\frac{1}{\eta_0}\left<\theta_0,\phi(x,a) \right>).$ Here, $w$ controls behavioral bias and $\eta_0$ controls coverage. We use $5$ seeds, and over each random seed we generate a dataset $\mathcal{D}_{N_{\max}}$ with $|\mathcal{D}_{N_{\max}}| =3,000$ samples by repeating: (1) sample $x$ uniformly from $\mathcal{X}$, (2) sample $a,a' \overset{i.i.d}{\sim} \pi_0(\cdot | x)$, and (3) draw Bradley-Terry preferences $y_1,y_2$ using $\theta^*_1,\theta^*_2$.

For each random seed, we evaluate convergence by measuring performance on the first $N$ samples of the dataset, increasing $N$ from $0$ to $N_{\max}=3000$ in increments of $300$. 
In our simulations, we set $|\mathcal{X}|=100$ and $|\mathcal{A}|=10$ to reduce computational overhead.
For each oracle $k \in \{1,2\}$, we estimate $\widehat{\theta}_k$ using a regularized Bradley–Terry MLE (for stability at small $N$), applied to the pairwise feature differences $\phi(x,a)-\phi(x,a')$, optimized with L-BFGS.
We minimize the empirical dual using projected gradient descent as in Algorithm~\ref{alg:pgd-dual} with step size $\alpha = \frac{\eta}{B^2}$ where $\eta = .05$ and $B = \underset{x,a}{\max}|\langle\widehat{\theta}_2,\phi(x,a) \rangle|$. 

To generate an active yet feasible constraint level, we calibrate $J_{\min}$ once per configuration using the ground-truth reward and an expanded dataset of $10{,}000$ samples to ensure stability:
$
    J_{\min} = E_0 + \text{frac}\cdot (E_{\text{hi}} - E_0),
$
where $E_0$ is the constraint expectation at $\lambda = 0$ and $E_{\text{hi}}$ is the expectation at $\lambda_{\text{hi}} = 5$. Each figure reports averages over random seeds, using the same per-seed dataset prefixes across parameter configurations. When varying $w$, we regenerate $\pi_0$ and the per-seed datasets, and recalibrate $J_{\min}$ accordingly. For comparison, we approximate $\lambda^*$ via gradient descent using ground-truth rewards with increased iterations for accuracy, and then recover the corresponding optimal Gibbs policy as in our analysis.

Our simulations confirm the theory: Figure~\ref{fig:combo} shows convergence across three values of $w$, with both primal suboptimality and constraint violation decreasing as $N$ increases. Shaded regions indicate confidence intervals over $5$ random seeds, illustrating the consistency of our approach. As $w$ increases from $0.3$ (which biases the generating policy toward oracle $1$) to $w= 0.9$ (which biases toward oracle $2$), we observe a higher initial constraint violation. In each setting, however, we observe convergence to near-zero violation and sub-optimality. We note that during runtime, we allows the constraint violation to be negative when the constraint is satisfied with slack.

\begin{figure*}[h]
  \centering
  \begin{subfigure}[t]{0.33\textwidth}
    \centering
    \includegraphics[width=\linewidth]{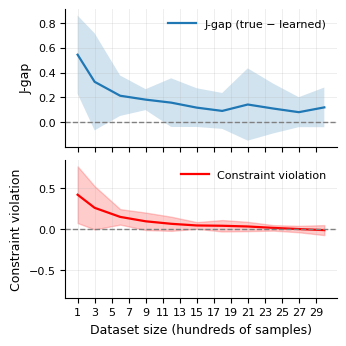}
    \caption{$w = 0.3$}
    \label{fig:combo:a}
  \end{subfigure}\hfill
  \begin{subfigure}[t]{0.33\textwidth}
    \centering
    \includegraphics[width=\linewidth]{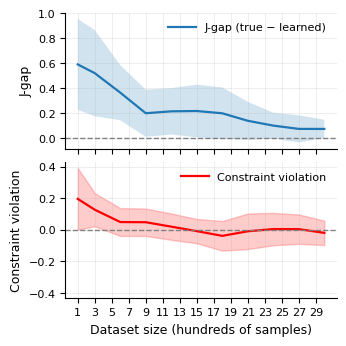}
    \caption{$w = 0.6$}
    \label{fig:combo:b}
  \end{subfigure}
  \begin{subfigure}[t]{0.33\textwidth}
    \centering
    \includegraphics[width=\linewidth]{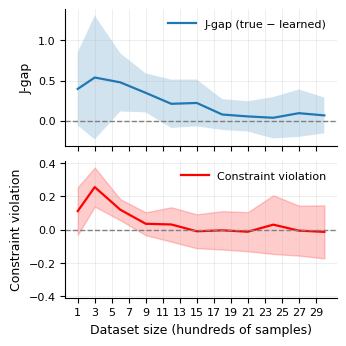}
    \caption{$w = 0.9$}
    \label{fig:combo:c}
  \end{subfigure}\hfill
  \caption{Performance vs.dataset size ($N$) with $T=1000$ over three settings of $w$.
  \textbf{Top:} Primal objective sub-optimality.
  \textbf{Bottom:} Constraint violation.}
  \label{fig:combo}
\end{figure*}

\section{Proofs}
\subsection{Proof of Corollary~\ref{cor:slaterData}}
\begin{proof}
Applying Cauchy--Schwarz, the confidence bound $\|\theta_2^*-\widehat\theta_2\|_{\Sigma_{N,\mathrm{reg}}}\le\beta_N$, 
and $\|\phi(x,a)\|\le 1$ yields the inequality.
\end{proof}
\subsection{Proof of Lemma~\ref{lem:gradLips}}
\begin{proof}
    Fix $x \in \mathcal{X}$, and let $A(\lambda) = \log Z_\lambda(x)$ denote the log-partition function of $\pi^*_\lambda(\cdot \mid x)$. Then
    \begin{align*}
        \frac{d}{d\lambda} A(\lambda) &= \eta^{-1}\mathbb{E}_{\pi^*_\lambda(\cdot \mid x)}[r^*_2(x, a)], \\ 
        \frac{d^2}{d\lambda^2} A(\lambda) &= \eta^{-2}\operatorname{Cov}_{\pi^*_\lambda(\cdot \mid x)}(r^*_2(x, a)).
    \end{align*} By Assumption~\ref{assump:linearReward}, the reward function is bounded by $B$, so the covariance is bounded by $B^2$, and hence the derivative of $A$ is Lipschitz with constant at most $\frac{B^2}{\eta^2}$.
\end{proof}

\subsection{Proof of Lemma~\ref{dual concentrations}}
\begin{proof}
    Notice that by the definition, $|g(\lambda) - \widehat{g}(\lambda)| \leq \max_{\pi \in \{\pi^*_\lambda,\widehat{\pi}_\lambda\}}
          \big|\mathcal{L}(\pi,\lambda;\theta^*_1,\theta^*_2) 
          - \mathcal{L}(\pi,\lambda;\widehat{\theta}_1,\widehat{\theta}_2)\big|.$
    This implies the bound
    \begin{align*}
         \max_{\pi \in \{\pi^*_\lambda,\widehat{\pi}_\lambda\}}
        &\mathbb{E}_\pi\left[\|\phi(x,a)\|_{\Sigma_{N,\mathrm{reg}}^{-1}}\right]\\
        &\qquad\cdot\|\theta^*_1 - \widehat{\theta}_1 + \lambda(\theta^*_2 - \widehat{\theta}_2)\|_{\Sigma_{N,\mathrm{reg}}}.
    \end{align*}
    Finally, by the MLE error and the bound $\|\phi(x,a)\|_{\Sigma_{N,\mathrm{reg}}^{-1} } \leq \|\phi(x,a)\|/\sqrt{\lambda_{\min}(\Sigma_{N,\mathrm{reg}})},$ the result follows. To bound the difference of derivatives, notice that 
    \begin{align*}
        |\widehat{g}^\prime(\lambda) - g'(\lambda)| 
        &= |\mathbb{E}_{\widehat{\pi}_\lambda}[\widehat{r}_2] - \mathbb{E}_{\pi^*_\lambda}[r^*_2]| \\ &\leq |\mathbb{E}_{\widehat{\pi}_\lambda}[\widehat{r}_2 - r^*_2]| + | \mathbb{E}_{\widehat{\pi}_\lambda}[r^*_2] - \mathbb{E}_{\pi^*_\lambda}[r^*_2]|.
    \end{align*}
        The first term can be bounded as before, while the second is bounded as follows:    
        \begin{align*}
        &| \mathbb{E}_{\widehat{\pi}_\lambda}[r^*_2] - \mathbb{E}_{\pi^*_\lambda}[r^*_2]| \\
        &\qquad\leq \|\theta^*_2\| \cdot \|\mathbb{E}_{a \sim \widehat{\pi}_\lambda (\cdot | x)}[\phi(x,a)] - \mathbb{E}_{a \sim \pi^*_\lambda (\cdot | x)}[\phi(x,a)]\| \\
        &\qquad\stackrel{(a)}\leq  B \|\frac{1}{\eta}(\widehat{\theta}_1 + \lambda \widehat{\theta}_2) - \frac{1}{\eta}(\theta^*_1 + \lambda \theta^*_2)\| \\
        &\qquad\leq \frac{(1+\lambda)\beta_N}{\eta\sqrt{\lambda_{\min}(\Sigma_{N,\mathrm{reg}})}}.
    \end{align*}
    where (a) follows from the boundedness of $\theta^*_2$ and an argument similar to Lemma~\ref{lem:gradLips}.
\end{proof}
\subsection{Proof of Lemma~\ref{lem:norm-change}}
\begin{proof}
    By Assumption~\ref{assump:linearReward}, we have $\|\phi(x,a)\|_2 \leq 1$. Therefore $\|\Delta\|_2 \leq 2$. Hence $\Delta$ is a sub-Gaussian vector with parameter $K = O(1)$. Then by Theorem 4.7.1 and Remark 4.7.3 in \citet{vershynin2018high} we have with probability $1-\delta$, we have $\|\Sigma_{\mathcal{D}_N} - \Sigma_{\mathcal{D}_\infty}\|_{op} \leq \overline{\varepsilon}_N(\delta)\|\Sigma_{\mathcal{D}_\infty}\|_{op}.$
    
    Since $\Sigma_{\mathcal{D}_\infty}$ and $\Sigma_{\mathcal{D}_N}$ are both positive semi-definite, by triangle inequality we have
    \begin{align*}
        \lambda_{\max}\left( \Sigma_{\mathcal{D}_N} \right) &\leq \lambda_{\max}(\Sigma_{\mathcal{D}_\infty}) + \lambda_{\max}(\Sigma_{\mathcal{D}_N} - \Sigma_{\mathcal{D}_\infty}) \\ 
        &\leq (1+\overline{\varepsilon}_N(\delta))\lambda_{\max}(\Sigma_{\mathcal{D}_\infty})
    \end{align*}
    Furthermore, by a corollary (spectral stability) of Weyl's inequality, we have
    \begin{align*}
        |\lambda_{\min}(\Sigma_{\mathcal{D}_N}) - \lambda_{\min}(\Sigma_{\mathcal{D}_\infty})| &\leq \|\Sigma_{\mathcal{D}_N} - \Sigma_{\mathcal{D}_\infty}\|_{op} \\ 
        &\leq \underline{\varepsilon}_N(\delta)\lambda_{\min}(\Sigma_{\mathcal{D}_\infty})
    \end{align*}
    Therefore $\lambda_{\min}(\Sigma_{\mathcal{D}_N}) \geq  \left(1-\underline{\varepsilon}_N\right)\lambda_{\min}(\Sigma_{\mathcal{D}_\infty}).$
    Combining the inequalities and noting $\lambda_{i}(\Sigma_{N,\mathrm{reg}})=\lambda_{i}(\Sigma_{\mathcal{D}_N})+\lambda_{\mathrm{reg}}$, the result follows from the definition of the Mahalanobis norm.
\end{proof}
\subsection{Proof of Proposition~\ref{prop:strict-convexity}}
\begin{proof}
    Analogous to the proof of Lemma~\ref{lem:gradLips}, we have $$g''(\lambda)=\frac{1}{\eta}\mathbb{E}\left[\operatorname{Var}_{a\sim \pi_\lambda^*(\cdot|x)}\!\big(r_2^*(x,a)\big) \right].$$ Since $\eta>0$ and $\pi_0(\cdot|x)$ has full support on $\mathcal{A}$, the Gibbs policy $\pi_\lambda^*(\cdot|x)$ also has full support for every $x$ and every finite $\lambda$. Hence, for any $x$ where $r_2^*(x,\cdot)$ is non-constant, $\operatorname{Var}_{\pi_\lambda^*(\cdot\mid x)}(r_2^*(x,a))>0$; by the positive–measure assumption, the expectation over $x$ is strictly positive for every $\lambda\in[0,\Lambda]$. The map $\lambda\mapsto g''(\lambda)$ is continuous, so on the compact interval $[0,\Lambda]$, $$m_g(\Lambda)=\inf_{\lambda\in[0,\Lambda]}g''(\lambda)$$ is attained and, since $g''(\lambda)>0$ for all $\lambda$, we have $m_g(\Lambda)>0$. Therefore, $g$ is $m_g(\Lambda)$–strongly convex on $[0,\Lambda]$.
\end{proof}
\subsection{Proof of Theorem~\ref{thm:lambdaBound}}
\begin{proof}
By Slater's condition, strong duality holds and there exists $\lambda^*\ge 0$ such that
$J(\pi^*) = g(\lambda^*)$, and $(\pi^*,\lambda^*)$ is a saddle point:
$\mathcal L(\pi^*,\lambda)\ge \mathcal L(\pi^*,\lambda^*)\ge \mathcal L(\pi,\lambda^*)$ for all $\pi\in\Pi$ and $\lambda\ge 0$.
The right inequality implies $\pi^*\in\arg\max_\pi \mathcal L(\pi,\lambda^*)$, and by uniqueness we have $\pi^*=\pi^*_{\lambda^*}$.

\textbf{Deterministic bound.} By strong duality, $B\geq g(\lambda^*)\ge\mathcal L(\tilde\pi,\lambda^*)\ge J(\tilde\pi)+\lambda^*\rho.$ Thus $\lambda^*\le \frac{B-J(\tilde\pi)}{\rho}=\Lambda$. On the other hand, by strong convexity of $g(\cdot)$ we have, for all $\lambda \ge 0$, $g'(\lambda) \ge g'(0) + m_g(\Lambda)\lambda.$ Substituting $\lambda^*$ and using $g'(\lambda^*)=0$ yields the desired bound.

\textbf{Data–driven bound.} The result follows from Corollary~\ref{cor:slaterData}, together with the facts that, with probability at least $1-\delta$, $J(\tilde \pi) \;\ge\; \widehat J(\tilde\pi) - \tfrac{\beta_N}{\sqrt{\lambda_{\min}(\Sigma_{N,\mathrm{reg}})}},$ and that, with probability at least $1-2\delta$, $g'(0)\;\le\;\widehat g'(0)-\mathcal{E}_{g'}(0).$ Here, similar to Lemma~\ref{lem:gradLips}, one can replace $m_g(\Lambda)$ with $m_{\widehat g}(\Lambda)$ at the cost of introducing an additional error term.
\end{proof}
\subsection{Proof of Theorem~\ref{thm:main}}
\begin{proof}
    The proof follows from standard projected gradient descent analysis combined with our concentration results. For the dual sub-optimality we decompose into two terms. The first term is upper bounded by our concentration $$g(\bar{\lambda}_T) - \widehat{g}(\bar{\lambda}_T) + \widehat{g}(\lambda^*) - g(\lambda^*) \leq 2\mathcal{E}_g(R)$$ and the second by standard results in projected gradient descent $$\widehat{g}(\bar{\lambda}_T) - \widehat{g}(\lambda^*) \leq \widehat{g}(\bar{\lambda}_T) - \widehat{g}(\widehat{\lambda}^*) \leq \frac{B^2R^2}{2\eta T}$$ where we use $\widehat{g}(\widehat{\lambda}^*) \leq \widehat{g}(\lambda^*)$ and Lipschitz parameter $L = \frac{B^2}{\eta}$ with step size $\frac{1}{L}$. For constraint violation, we decompose $|g'(\bar{\lambda}_T)|$ into two pieces. The first bounded by our concentration $|g'(\bar{\lambda}_T) - \widehat{g}'(\bar{\lambda}_T)| \leq \mathcal{E}_{g'}(R)$ and the second again by standard results in projected gradient descent $$|\widehat{g}'(\bar{\lambda}_T)| \leq \sqrt{2L(\widehat{g}(\bar{\lambda}_T) - \widehat{g}(\widehat{\lambda}^*))} \leq \frac{B^2R}{\eta \sqrt{T}}$$. For primal sub-optimality we note that $g(\lambda) = \mathcal{L}(\pi^*_\lambda,\lambda)$ and $J(\pi^*_{\bar{\lambda}_T}) = g(\bar{\lambda}_T) + \bar{\lambda}_T(J_{\min} - \mathbb{E}_{\pi^*_{\bar{\lambda}_T}}[r^*_2])$. With strong duality this yields that the primal sub-optimality is upper bounded by the sum of dual sub-optimality and constraint violation.
\end{proof}
\subsection{Proof of Proposition~\ref{prop:f-div}}
\begin{proof}
    Fix $x$ and denote $r^*_\lambda(a) := r^*_1(x,a) + \lambda r^*_2(x,a)$ and $\pi(a):= \pi(a \mid x)$. Consider the problem
    \begin{align*}
        \max_{\pi} \left\{\mathbb{E}_{\pi}[r^*_\lambda(a)] - \eta D_f(\pi||\pi_0) \right\},
    \end{align*}
    where the $f$-divergence is defined as
    \begin{align*}
        D_f(\pi||\pi_0) = \sum_a \pi_0(a)f\left( \frac{\pi(a)}{\pi_0(a)} \right),
    \end{align*}
    with $f$ strictly convex on $\mathbb{R}_{+}$, differentiable on $(0,\infty)$, and $f(1) = 0$. The objective is to maximize
    \begin{align*}
        \sum_a \pi(a)r^*_\lambda(a) - \eta \sum_a \pi_0(a)f\left( \frac{\pi(a)}{\pi_0(a)} \right)
    \end{align*}
    subject to the constraints $\sum_a \pi(a) = 1$ and $\pi(a) \geq 0$. Since $-D_f(\cdot||\pi_0)$ is strictly concave in $\pi$ (inherited from the strict convexity of $f$), the objective is strictly concave and the maximizer is unique. The Lagrangian is given by
    \begin{align*}
        \mathcal{L}(\pi,\tau,v) := \sum_a \pi(a)r^*_\lambda(a) - \eta\sum_a\pi_0(a)f\left(\frac{\pi(a)}{\pi_0(a)}\right) \\
         + \tau\left(\sum_a \pi(a) - 1 \right) + \sum_a v(a)\pi(a).
    \end{align*}
    For any $a$ such that $\pi(a) > 0$, complementary slackness implies $v(a) = 0$, and the stationarity condition yields
    \begin{align*}
        0 = \frac{\partial \mathcal{L}}{\partial \pi(a)} = r^*_\lambda(a) - \eta f'\left( \frac{\pi(a)}{\pi_0(a)} \right) + \tau. 
    \end{align*}
    Rearranging terms, we obtain
    \begin{align*}
        \frac{\pi(a)}{\pi_0(a)} = (f')^{-1}\left( \frac{r^*_\lambda(a) + \tau}{\eta} \right). 
    \end{align*}
    If $\frac{r^*_\lambda(a) + \tau}{\eta} < \inf \mathrm{range}(f')$, the interior equation has no solution with $\pi(a) > 0$; in this case, the KKT conditions imply the boundary solution $\pi(a) = 0$. We denote this compactly using the operator $[\cdot]_{+}$. Defining $\tau_x = -\tau$, the optimal policy takes the form
    \begin{align*}
        \pi^*(a) = \pi_0(a)\left[ (f')^{-1}\left( \frac{r^*_\lambda(a) - \tau_x}{\eta} \right) \right]_{+}.
    \end{align*}
    Finally, $\tau_x$ is uniquely determined by the constraint $\sum_a \pi^*(a) = 1$. Existence is guaranteed because the sum is continuous and monotone decreasing in $\tau_x$, and uniqueness follows from the strict concavity of the objective.
\end{proof}
\end{document}